\documentclass{article} 
\usepackage{iclr2026_delta,times}

\usepackage{amsmath,amsfonts,bm}

\def\eqref#1{equation~\ref{#1}}

\def\1{\bm{1}}

\DeclareMathAlphabet{\mathsfit}{\encodingdefault}{\sfdefault}{m}{sl}
\SetMathAlphabet{\mathsfit}{bold}{\encodingdefault}{\sfdefault}{bx}{n}

\newcommand{\E}{\mathbb{E}}

\usepackage{amsmath,amssymb,amsthm,bm,mathtools}
\usepackage{subcaption, graphicx}
\newcommand{\br}[1]{\left( #1 \right)}
\newcommand{\M}{\mathcal{M}}
\newcommand{\N}{\mathcal{N}}

\newcommand{\xref}{x_{\mathrm{ref}}}

\usepackage{hyperref}              
\hypersetup{
    colorlinks=true,
    linkcolor=red,
    citecolor=blue,
    urlcolor=blue
}    
\theoremstyle{plain}
\newtheorem{theorem}{Theorem}[section]
\newtheorem{proposition}[theorem]{Proposition}
\newtheorem{lemma}[theorem]{Lemma}

\theoremstyle{definition}

\newtheorem{assumption}[theorem]{Assumption}
\theoremstyle{remark}
\newtheorem{remark}[theorem]{Remark}

\title{Score-Guided Proximal Projection: A Unified Geometric Framework for Rectified Flow Editing}

\iclrfinalcopy
\author{Vansh Bansal \& James G. Scott\\
Department of Statistics and Data Sciences\\
UT Austin\\
\texttt{vansh@utexas.edu, james.scott@austin.utexas.edu}
}

\begin{document}

\maketitle

\begin{abstract}
Rectified Flow (RF) models achieve state-of-the-art generation quality, yet controlling them for precise tasks—such as semantic editing or blind image recovery—remains a challenge. Current approaches bifurcate into inversion-based guidance, which suffers from ``geometric locking" by rigidly adhering to the source trajectory, and posterior sampling approximations (e.g., DPS), which are computationally expensive and unstable. In this work, we propose Score-Guided Proximal Projection (SGPP), a unified framework that bridges the gap between deterministic optimization and stochastic sampling. We reformulate the recovery task as a proximal optimization problem, defining an energy landscape that balances fidelity to the input with realism from the pre-trained score field. We theoretically prove that this objective induces a \textit{normal contraction} property, geometrically guaranteeing that out-of-distribution inputs are snapped onto the data manifold, and it effectively reaches the posterior mode constrained to the manifold. Crucially, we demonstrate that SGPP generalizes state-of-the-art editing methods: RF-inversion is effectively a limiting case of our framework. By relaxing the proximal variance, SGPP enables ``soft guidance," offering a continuous, training-free trade-off between strict identity preservation and generative freedom.
\end{abstract}

\section{Introduction}
\label{sec:intro}

Rectified Flow (RF) models \citep{liu2022flowstraightfastlearning} have recently emerged as a powerful paradigm for generative modeling, offering high-fidelity sampling with straighter, more efficient transport trajectories than standard diffusion models \citep{song2021scorebasedgenerativemodelingstochastic}. However, harnessing these pre-trained priors for controlled inverse problems---such as semantic editing or blind image recovery---remains a non-trivial challenge. The core difficulty lies in the ``perception-distortion trade-off'': balancing \textit{fidelity} (preserving the identity or structure of the reference input) with \textit{realism} (ensuring the output remains on the learned data manifold).
Current approaches to this problem typically fall into two distinct regimes, each with fundamental limitations:
\paragraph{Inversion-Based Editing:} Exemplified by techniques like RF-Inversion \citep{rout2024semanticimageinversionediting}, these methods enforce ``hard guidance.'' They compel the editing trajectory to rigidly retrace the noise inversion path of the source image. While this preserves structure, it suffers from what we term \textit{``geometric locking''}: the inability to deviate sufficiently from the original path to accommodate significant semantic changes or correct large out-of-distribution (OOD) corruptions.
\paragraph{Posterior Sampling \& Manifold Constraints:} Methods like Diffusion Posterior Sampling (DPS) \citep{chung2024diffusionposteriorsamplinggeneral} and Manifold Constrained Gradients (MCG) \citep{chung2024improvingdiffusionmodelsinverse} attempt to solve the inverse problem by optimizing a likelihood objective $\nabla_{x_t} \log p(\xref|x_t)$. While theoretically sound, DPS relies on backpropagating through the denoising network Jacobian $\nabla_{x_t}\hat{x}_0(x_t)$, which is computationally expensive and notoriously unstable at high noise levels \cite{}. MCG attempts to stabilize this by projecting gradients onto the data manifold, but relies on explicit, approximate projections that are often brittle in practice.

In this work, we propose \textbf{Score-Guided Proximal Projection (SGPP)}, a unified framework that bridges the gap between deterministic optimization and stochastic sampling. SGPP can be viewed as a \textit{Jacobian-free implementation of MCG}: instead of computing unstable backpropagation gradients or explicit projections, we leverage the intrinsic geometry of the Rectified Flow score field.

We reformulate the recovery task as a proximal optimization problem on the time-dependent manifold. By defining a dynamic energy potential that balances a \textit{fidelity potential} (anchoring the trajectory to the input) and a \textit{generative potential} (derived from the pre-trained score field), we achieve the following:

\begin{itemize}
    \item \textbf{Geometric Stability (Normal Contraction):} We prove that the gradient flow of our proximal objective exhibits a \textit{normal contraction} property. The score field naturally decomposes into a restoring force that exponentially contracts the distance to the manifold, guaranteeing that inputs are safely projected onto valid support without the instability of DPS. We further show that our deterministic optimization algorithm converges to the manifold-constrained mode of the posterior $p(x \mid \xref)$ under the assumed likelihood.
    
    \item \textbf{Unified ``Soft Guidance'':} We reveal that state-of-the-art editing methods are effectively special limiting cases of our approach. The ``hard guidance'' of RF-Inversion corresponds to our objective when the proximal variance $\sigma_p \to 0$. By relaxing this parameter ($\sigma_p > 0$), SGPP enables \textit{``soft guidance,''} allowing the generative trajectory to deviate flexibly from the rigid inversion path to satisfy semantic constraints while remaining geometrically safe.
\end{itemize}

SGPP is \textbf{training-free}, requiring no auxiliary networks or complex distance functions. It repurposes the pre-trained score function as a geometric oracle, providing a robust, theoretically grounded solution for both blind image recovery and flexible semantic editing.

\section{Preliminaries and Notation}

\textbf{Rectified Flow.} We consider the Rectified Flow (RF) process $X_{t}=(1-t)X_{0}+tZ$ for $t\in[0,1]$, interpolating between data $X_{0}\sim p_{data}$ and noise $Z\sim\mathcal{N}(0,I)$. The probability path is generated by the vector field $v(x, t) = \E[Z - X_0 | X_t = x]$, which simplifies via Tweedie's formula to:
\begin{equation*}
    v(x, t) = -\frac{x}{1-t} - \frac{t}{1-t} \nabla_x \log p_t(x).
\end{equation*}
Sampling involves integrating the ODE $dX_t = v(X_t, t) dt$ from $t=1$ to $0$.

\textbf{Manifold Geometry.} We assume $p_{data}$ is supported on a smooth, compact manifold $\mathcal{M}_0 \subset \mathbb{R}^D$ of dimension $d < D$. This induces a time-dependent manifold $\mathcal{M}_{t}=\{(1-t)y:y\in\mathcal{M}_{0}\}$. We analyze the process within a \textbf{tubular neighborhood} $\mathcal{T}_\tau = \{x \in \mathbb{R}^D : d(x, \mathcal{M}_t) < \tau\}$.
Inside $\mathcal{T}_\tau$, any point $x$ admits a unique orthogonal decomposition $x = \pi(x) + n(x)$, where $\pi(x) \in \mathcal{M}_t$ is the projection and $n(x)$ is the normal vector.

We denote the orthogonal projection onto the tangent space $T_y \mathcal{M}$ by $P_T(y)$ and onto the normal space by $P_N(y) = I - P_T(y)$. The \textit{intrinsic gradient} is defined as $\nabla_T f(y) := P_T(y) \nabla f(y)$. Finally, we let $\Pi_y(\cdot, \cdot)$ denote the \textit{Second Fundamental Form}, which characterizes the local curvature. Its trace is the \textit{Mean Curvature Vector} $H(y) = \text{tr}(\Pi_y) \in N_y\mathcal{M}$.

\section{Geometric Framework: Score-Guided Proximal Projection}
\subsection{The Proximal Objective}
We formulate the recovery of a clean image from a reference $\xref$ (which may be noisy or OOD) as a proximal optimization problem. We define the likelihood of the reference image as $p(\xref \mid x_0) = \N(\xref \mid x_0, \sigma_p^2)$, where $\sigma_p^2$ is the proximal variance hyperparameter. We define the time-dependent energy potential:
\begin{equation} \label{eq:sgpp-potential}
    J_t(x_t) = \underbrace{\frac{1}{2\sigma_p^2(t)} \|x_t - (1-t)\xref\|^2}_{\text{Fidelity Potential}} - \underbrace{\log p_t(x_t)}_{\text{Generative Potential}}
\end{equation}
Here, $\sigma_p(t) = \sqrt{(1-t)^2\sigma_p^2 + t^2}$ represents the combined variance of the proximal relaxation and the intrinsic flow noise. Minimizing this objective via gradient descent yields the core update rule of our framework:

\begin{equation}
    \label{eq:sgpp_update}
    x_{k+1} = x_k + \eta_k \left( s_\psi(x_k,t_k) - \frac{x_k - (1-t_k)\xref}{(1-t_k)^2\sigma_p^2 + t_k^2} \right)
\end{equation}
where, $s_\psi(x, t) \approx \nabla \log p_t(x)$ is the pre-trained score function obtained from rectified flow models like FLUX \citep{labs2025flux1kontextflowmatching}, $x_0:= x_{t_0} =(1-t_0)\xref + t_0 Z$, with $Z\sim \N(0, I)$ and the time-schedule $1\geq t_0 > t_1> \dots> t_N \geq 0$ forms a homotopy from coarse to fine manifolds. To analyze the dynamics of this update, we first decompose the score field into its intrinsic geometric components relative to the time-dependent manifold $\mathcal{M}_t$.


\begin{assumption}[Tubular Curvature Bound]
\label{ass:tubular_bound}
Let $\kappa_{\max} = \sup_{y \in \mathcal{M}_0} \|\Pi(y)\|_{\text{op}}$ be the maximum principal curvature of the manifold. We assume the data support and diffusion trajectories are confined to a tubular neighborhood $\mathcal{T}_{\tau} = \{x : d(x, \mathcal{M}_0) < \tau\}$ where the radius $\tau$ satisfies the strict bound
    $\tau \cdot \kappa_{\max} \le 1 - \delta$
for some fixed constant $\delta \in (0, 1)$.
\end{assumption}
\begin{proposition}\label{prop:rf-score-decomp}
    For any $x_t$ near $\mathcal{M}_t$, let $n_t = x_t - \pi_{t}$, where $\pi_t:=\pi_{\M_t}(x_t)$ is the nearest point to $x_t$ on the manifold $\M_t$. Then under Assumption \ref{ass:tubular_bound}, the RF score decomposes as:
    \begin{equation*}
        \nabla_{x_t} \log p_t(x_t) = -\frac{n_t}{t^2} + \nabla_T \log p_{\mathcal{M}_t}(\pi_t) + \frac{1}{2}{H}_t + O(1).
    \end{equation*}
    where $H_t \in N_{\pi_t}\M_t$ is the mean curvature vector at $\pi_t$.
\end{proposition}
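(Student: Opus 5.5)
We start from the convolution representation of the RF marginal. Since $X_t=(1-t)X_0+tZ$, the variable $(1-t)X_0$ has a density $p_{\M_t}$ with respect to the $d$-dimensional volume measure on $\M_t$. Hence
\begin{equation*}
p_t(x_t)=\int_{\M_t} p_{\M_t}(y)\,(2\pi t^2)^{-D/2}\exp\!\Big(-\tfrac{\|x_t-y\|^2}{2t^2}\Big)\,d\mathrm{vol}(y).
\end{equation*}
Differentiating under the integral gives the Tweedie form
\begin{equation*}
\nabla_{x_t}\log p_t(x_t)=-\frac{x_t-\E[Y\mid X_t=x_t]}{t^2},
\end{equation*}
where $Y$ ranges over $\M_t$ with posterior weights proportional to $p_{\M_t}(y)e^{-\|x_t-y\|^2/2t^2}$. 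The proposition then reduces to expanding $\E[Y\mid x_t]-\pi_t$ to order $t^2$.

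\textbf{Step 1: a Laplace expansion in normal coordinates at $\pi_t$.} Assumption~\ref{ass:tubular_bound} keeps $x_t$ inside the reach of $\M_t$. (The curvature of $\M_t$ is $\kappa_{\max}/(1-t)$ and the tube scales accordingly; we absorb this into constants.) So $\pi_t$ is unique and is the nondegenerate minimizer of $y\mapsto\|x_t-y\|^2$ on $\M_t$. We parametrize $\M_t$ near $\pi_t$ as
\begin{equation*}
y(u)=\pi_t+u+\tfrac12\Pi(u,u)+O(|u|^3),\qquad u\in T_{\pi_t}\M_t .
\end{equation*}
Then
\begin{equation*}
\|x_t-y(u)\|^2=\|n_t\|^2+|u|^2-\langle n_t,\Pi(u,u)\rangle+O(|u|^3).
\end{equation*}
The Hessian in $u$ is $I-\Pi_{n_t}$, which is positive definite with eigenvalues at least $\delta$ by the assumption $\tau\kappa_{\max}\le1-\delta$. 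We rescale $u=t w$. The posterior over $w$ becomes an approximately Gaussian density with covariance $(I-\Pi_{n_t})^{-1}$, modulated by the density factor $p_{\M_t}(\pi_t+tw+\dots)$ and by the volume element.

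\textbf{Step 2: reading off the three terms.} Write
\begin{equation*}
\E[Y]-\pi_t=\E[u]+\tfrac12\E[\Pi(u,u)]+O(t^3).
\end{equation*}
\begin{itemize}
\item \emph{Tangential term.} The density factor tilts the Gaussian, giving $\E[u]=t^2(I-\Pi_{n_t})^{-1}\nabla_T\log p_{\M_t}(\pi_t)+O(t^3)$. Dividing by $t^2$ produces $\nabla_T\log p_{\M_t}(\pi_t)$. The correction $(I-\Pi_{n_t})^{-1}-I=O(\|n_t\|\kappa)$ contributes $O(1)$ after division, because everything is bounded on the compact manifold.
\item \emph{Normal term.} $\E[\Pi(u,u)]=t^2\,\mathrm{tr}(\Pi\,(I-\Pi_{n_t})^{-1})+O(t^3)=t^2H_t+O(t^2\|n_t\|)$. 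Dividing by $t^2$ produces $\tfrac12H_t$ plus $O(1)$.
\item \emph{Leftover terms.} The cubic terms of the chart and the volume-element correction (whose first variation is governed by $H$ and is odd in $w$) contribute only at order $t^3$, or at order $t^2$ with bounded coefficients. After division by $t^2$ they are also $O(1)$.
\end{itemize}
Combining these with $x_t-\pi_t=n_t$ in the Tweedie formula gives
\begin{equation*}
\nabla_{x_t}\log p_t(x_t)=-\frac{n_t}{t^2}+\nabla_T\log p_{\M_t}(\pi_t)+\tfrac12 H_t+O(1).
\end{equation*}
The sign convention for $H_t$ is fixed so that it points along $\E[\Pi(u,u)]$.

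\textbf{Main obstacle.} The hard part is justifying the localization: showing that contributions from $y$ far from $\pi_t$ are exponentially small, of order $e^{-c/t^2}$. This needs a global separation condition: $\|x_t-y\|^2-\|n_t\|^2\ge c\min(|u|^2,1)$ on all of $\M_t$. We would derive it from compactness together with the uniqueness of the projection inside the reach. This is exactly where $\delta>0$ enters, since it gives a uniform lower bound on the Hessian. We would also need uniform control of the remainders so that the implied $O(1)$ constant depends only on $\kappa_{\max}$, $\delta$, and the $C^2$ norm of $\log p_{\M_0}$. We would state the result in the small-$t$ regime (with $t$ bounded away from $1$), where the Laplace expansion is valid; this is the regime in which the $-n_t/t^2$ term is the meaningful one.
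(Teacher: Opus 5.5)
Your proof is correct. Its core is the same Tweedie-plus-Laplace computation the paper uses, but you organize it differently.

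The paper never expands on $\M_t$ directly. It proves the decomposition once for the variance-exploding model $X=Y+\sigma Z$ on the fixed manifold $\M_0$: a tangent-space Gaussian with precision $(I-\mathcal{S}_n)/\sigma^2$, mean tilted by $\nabla_T\log p_{\M_0}$, and the trace of $\Pi$ producing $H$. It then transports this to Rectified Flow using three ingredients:
\begin{itemize}
\item the change of variables $X_t=(1-t)X_{\sigma(t)}$ with $\sigma(t)=t/(1-t)$;
\item the score rescaling $\nabla_{x_t}=\frac{1}{1-t}\nabla_{x_\sigma}$;
\item the homothety laws $n_\sigma=n_t/(1-t)$, $H_0=(1-t)H_t$, $\nabla_T\log p_{\M_0}=(1-t)\nabla_T\log p_{\M_t}$, which make the three terms come out with exactly the stated coefficients.
\end{itemize}
You instead write $X_t=Y+tZ$ with $Y=(1-t)X_0$ supported on $\M_t$ and run the Laplace expansion there. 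The $(1-t)$-dependence of the curvature, tube radius and tangential log-density gradient of $\M_t$ then goes into your constants, which is harmless in the small-$t$ regime both arguments work in.

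The paper's reduction buys exact bookkeeping of those factors and a single computation on a fixed manifold. Your route is shorter and needs no change-of-variables lemma. It also confronts the localization step (exponentially small contribution from $y$ far from $\pi_t$), which the paper's proof skips altogether.

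Two refinements to your sketch follow.

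First, on localization: a uniform lower bound on the Hessian at $\pi_t$ gives only local quadratic growth. The global gap comes from Federer's reach inequality, $\|x_t-y\|^2\ge\|n_t\|^2+\bigl(1-\|n_t\|/\mathrm{reach}(\M_t)\bigr)\|y-\pi_t\|^2$ for all $y\in\M_t$. So you really need $\|n_t\|\le(1-\delta)\,\mathrm{reach}(\M_t)$. Assumption~\ref{ass:tubular_bound} bounds only $\kappa_{\max}$, and the reach can be smaller than $1/\kappa_{\max}$ because of bottlenecks. The paper covers this only implicitly, through its standing assumption that the orthogonal decomposition is unique in the tube.

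Second, your estimates actually yield the sharper remainder $O(t+\|n_t\|)$, matching the paper's VE remainder $O(\sigma+\|n\|)$. It is worth stating, since with a bare $O(1)$ error the bounded terms $\nabla_T\log p_{\M_t}$ and $\tfrac12 H_t$ could formally be absorbed into the remainder. This sharper bound uses that, in the graph chart $y(u)=\pi_t+u+\tfrac12\Pi(u,u)+\dots$, the volume density is $1+O(|u|^2)$ with no linear term, so it does not tilt $\E[u]$. Your remark that its first variation is governed by $H$ is not needed, and would in any case vanish on tangent vectors because $H$ is normal.
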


Substituting this decomposition into the SGPP update rule reveals a critical stability property: the flow naturally suppresses deviations from the manifold.

\begin{proposition}[Normal Contraction]
Let $n_k = x_k - \pi_{\M_{t_k}}(x_k)$ be the normal displacement at step $k$. Under Assumption \ref{ass:tubular_bound}, if the step size $\eta_k$ satisfies the stability condition:$$0 < \eta_k < 2/\br{{1}/{t_k^2} + {1}/{\sigma_p^2(t_k)}} \approx 2 t_k^2 \quad (\text{as } t_k \to 0),$$
then the magnitude of the normal displacement strictly contracts according to:$$\|n_{k+1}\| \le (1 - \lambda_k) \|n_k\| + \eta_k C_N,$$
where the Contraction Rate $\lambda_k = \eta_k \left( \frac{1}{t_k^2} + \frac{1}{\sigma_p^2(t_k)} \right) \in (0, 2)$ dictates the speed of convergence to the manifold. The forcing term $C_N< \infty$ is strictly bounded due to the compactness of $\mathcal{M}_0$ and the smoothness of the reference projection.
\end{proposition}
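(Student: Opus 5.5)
We plan to substitute the score decomposition of Proposition~\ref{prop:rf-score-decomp} into the update \eqref{eq:sgpp_update} and track only the normal component of the displacement. Write $\pi_k := \pi_{\M_{t_k}}(x_k)$ and split the reference term as
\[
x_k - (1-t_k)\xref = n_k + \bigl(\pi_k - (1-t_k)\xref\bigr).
\]
Replacing $s_\psi$ by the true score up to a bounded approximation error, the update becomes
\[
x_{k+1} = x_k - \eta_k\Bigl(\tfrac{1}{t_k^2} + \tfrac{1}{\sigma_p^2(t_k)}\Bigr) n_k + \eta_k\Bigl(\nabla_T \log p_{\M_{t_k}}(\pi_k) + \tfrac12 H_{t_k} - \tfrac{\pi_k-(1-t_k)\xref}{\sigma_p^2(t_k)} + O(1)\Bigr).
\]
So $x_{k+1} = \pi_k + (1-\lambda_k)n_k + \eta_k r_k$, where $r_k$ collects all the non-normal-restoring terms.

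\textbf{Bounding the remainder.} We show $\sup_k\|r_k\| \le C_N$.
\begin{itemize}
\item The tangential gradient $\nabla_T\log p_{\M_t}$ is bounded by smoothness of the density on the compact $\M_0$.
\item The mean curvature satisfies $\|H_t\|\le d\,\kappa_{\max}/(1-t)$. This is finite as long as $t_k$ stays away from $1$, or is handled by the scaling of $\M_t$.
\item The reference term is bounded because $\M_0$ is compact and $\xref$ is fixed.
\item The $O(1)$ term of Proposition~\ref{prop:rf-score-decomp} and the score error are bounded by assumption.
\end{itemize}
Note that $\sigma_p^2(t)\ge \min(\sigma_p^2,1)/2 > 0$, so the reference term carries no singular factor. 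We define $C_N$ as the supremum of $\|r_k\|$.

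\textbf{Passing to the new normal displacement.} This is the main obstacle: $n_{k+1}$ is measured relative to a new projection $\pi_{k+1}$ on a different manifold $\M_{t_{k+1}}$. We will use the minimizing property of the projection:
\[
\|n_{k+1}\| = d(x_{k+1},\M_{t_{k+1}}) \le \|x_{k+1}-y\| \quad \text{for any } y\in\M_{t_{k+1}}.
\]
Take $y = \tfrac{1-t_{k+1}}{1-t_k}\pi_k \in \M_{t_{k+1}}$. Then
\[
\|n_{k+1}\| \le |1-\lambda_k|\,\|n_k\| + \eta_k\|r_k\| + \|\pi_k - y\|.
\]
The last term is $\tfrac{t_k-t_{k+1}}{1-t_k}\|\pi_k\|$. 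It is bounded by a constant times the time step, and we absorb it into $\eta_k C_N$ by tying the schedule increments to $\eta_k$, or else treat it as an additive drift. Assumption~\ref{ass:tubular_bound} ensures all iterates remain in the tube where projections are unique and Proposition~\ref{prop:rf-score-decomp} applies. To justify this, we argue inductively that the recursion keeps $\|n_k\|<\tau$.

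\textbf{The step-size condition.} Finally, $0<\eta_k<2/(1/t_k^2+1/\sigma_p^2(t_k))$ is exactly $\lambda_k\in(0,2)$, so $|1-\lambda_k|<1$. This yields the claimed contraction; for $\lambda_k\le 1$ the factor is literally $1-\lambda_k$. Since $\sigma_p^2(t_k)\to\sigma_p^2$ as $t_k\to0$, we have $1/t_k^2$ dominating, which gives the asymptotic $\eta_k\lesssim 2t_k^2$.
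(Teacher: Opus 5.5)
Your argument is correct, and it reaches the bound by a different mechanism than the paper. The paper linearizes: it writes $n_{k+1}\approx n_k+P_{N_k}(x_{k+1}-x_k)$ and projects the update onto $N_{\pi_k}\M_{t_k}$. The tangential pieces ($\nabla_T\log p_{\M_{t_k}}$ and the tangential part of the reference pull) then vanish, so its $C_N$ is only the normal forcing $\tfrac12 H_{t_k}-P_{N_k}(\pi_k-(1-t_k)\xref)/\sigma_p^2(t_k)$. That step is only first order, however. It drops the rotation of the normal space, the normal displacement created by a tangential step on a curved manifold, and the fact that $n_{k+1}$ is measured against $\M_{t_{k+1}}$ rather than $\M_{t_k}$. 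Your comparison point $y=\tfrac{1-t_{k+1}}{1-t_k}\pi_k$, combined with $\|n_{k+1}\|=d(x_{k+1},\M_{t_{k+1}})\le\|x_{k+1}-y\|$, gives an exact inequality. It needs no linearization and no uniqueness of the projection at $x_{k+1}$. The price is a cruder $C_N$ that must bound all of $r_k$. You can recover the normal-only constant, up to an $O(\eta_k^2)$ curvature error, by taking $y$ to be the rescaled nearest point on the manifold to $\pi_k+\eta_k P_{T_k}r_k$. Your route also makes explicit two things the paper's proof hides. First, the manifold-motion term $\tfrac{t_k-t_{k+1}}{1-t_k}\|\pi_k\|$ is not of the form $\eta_k C_N$ unless the time schedule is coupled to the step size (e.g.\ $t_k-t_{k+1}\le c\,\eta_k$), so the statement implicitly needs such a hypothesis. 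Second, the correct factor is $|1-\lambda_k|$: for $\lambda_k\in(1,2)$, with vanishing forcing and a frozen manifold, one gets $\|n_{k+1}\|=|1-\lambda_k|\,\|n_k\|>(1-\lambda_k)\|n_k\|$, so the bound as literally stated holds only for $\lambda_k\le 1$. Finally, your inductive claim that the iterates stay in the tube needs a smallness condition such as $\eta_k\|r_k\|+\tfrac{t_k-t_{k+1}}{1-t_k}\|\pi_k\|<(1-|1-\lambda_k|)\,\tau$, which neither you nor the paper states.
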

With the trajectory guaranteed to stay near the manifold, we now characterize its motion along the surface, which corresponds to the semantic evolution of the image.

\begin{proposition}[Tangential Drift]
Let $\pi_k$ be the projection of the sampling trajectory $x_k$ onto the manifold $\mathcal{M}_{t_k}$. The evolution of $\pi_k$ follows the intrinsic gradient flow perturbed by a curvature-induced drift:
\begin{equation*}
    \pi_{k+1} - \pi_k = \eta_k \underbrace{\left( \nabla_T \log p_{\mathcal{M}_{t_k}}(\pi_k) - \frac{1}{\sigma_p^2(t_k)} P_{T_k}(\pi_k - (1-t_k)\xref) \right)}_{\text{Ideal Semantic Velocity } v_\textrm{tan}} + \eta_k \mathcal{E}_{drift} + O(\eta_k^2),
\end{equation*}
where $P_{T_k}$ is the orthogonal projection onto the tangent space $T_{\pi_k}\mathcal{M}_{t_k}$, and the geometric drift error is bounded by:
\begin{equation*}
    \|\mathcal{E}_{drift}\| \le \frac{\kappa_{max} \|n_k\|}{1 - \kappa_{max}\|n_k\|} \|v_\textrm{tan}\|.
\end{equation*}
\end{proposition}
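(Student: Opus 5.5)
The plan is to linearize the nearest-point projection $\pi_{\M_t}$ around $x_k$ and to apply its differential to the SGPP increment. The increment is decomposed using Proposition~\ref{prop:rf-score-decomp}. I will treat the change of manifold from $\M_{t_k}$ to $\M_{t_{k+1}}$ as part of the $O(\eta_k^2)$ term; this assumes the time schedule is coupled to the step size, so that $|t_{k+1}-t_k| = O(\eta_k)$. Equivalently, I will analyze the projection onto the fixed manifold $\M_{t_k}$ and absorb the rescaling $(1-t)$ into the remainder.

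\textbf{Step 1: the increment.} Write $\Delta x = x_{k+1}-x_k = \eta_k g_k$, where
\[
g_k = s_\psi(x_k,t_k) - \frac{x_k-(1-t_k)\xref}{\sigma_p^2(t_k)}.
\]
Substituting Proposition~\ref{prop:rf-score-decomp} and $x_k = \pi_k + n_k$ gives
\[
g_k = -\Big(\tfrac{1}{t_k^2}+\tfrac{1}{\sigma_p^2(t_k)}\Big)n_k + \tfrac12 H_t + \nabla_T \log p_{\M_{t_k}}(\pi_k) - \tfrac{1}{\sigma_p^2(t_k)}(\pi_k-(1-t_k)\xref) + O(1).
\]
Split this into tangential and normal parts at $\pi_k$. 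The tangential part is exactly $v_{\mathrm{tan}}$, because $P_{T_k}$ annihilates $n_k$ and $H_t$, and $\nabla_T \log p_{\M}$ is already tangent. Everything else in $g_k$ lies in $N_{\pi_k}\M_{t_k}$, apart from the $O(1)$ score-approximation residual, which I will either fold into the normal error or assume tangentially negligible.

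\textbf{Step 2: the differential of the projection.} Use Taylor's theorem:
\[
\pi(x_k+\eta_k g_k) = \pi(x_k) + \eta_k D\pi(x_k)[g_k] + O(\eta_k^2),
\]
where the constant in the remainder is controlled by the second derivative of $\pi$ on the tube. That derivative is bounded on $\mathcal{T}_\tau$ by compactness and Assumption~\ref{ass:tubular_bound}. I then invoke the classical formula for the derivative of the nearest-point map at $x = \pi + n$:
\[
D\pi(x) = \big(I - S_n\big)^{-1}P_T,
\]
where $S_n$ is the shape operator in the direction $n$, acting on $T_\pi\M$. Two consequences follow. First, $D\pi$ kills normal vectors, so the large $n_k/t_k^2$ restoring term and $H_t$ drop out at first order. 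Second, $\|S_n\|_{\mathrm{op}} \le \kappa_{\max}\|n\| \le 1-\delta < 1$. Strictly, the curvature of $\M_t$ is $\kappa_{\max}/(1-t)$; I would take $\kappa_{\max}$ to mean the curvature of $\M_{t_k}$, or else track this factor.

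\textbf{Step 3: the drift bound.} Define the drift term by
\[
\mathcal{E}_{drift} = \big[(I-S_{n_k})^{-1}-I\big]v_{\mathrm{tan}}.
\]
Expanding in a Neumann series,
\[
\|\mathcal{E}_{drift}\| \le \sum_{j\ge1}\|S_{n_k}\|^j\,\|v_{\mathrm{tan}}\| \le \frac{\kappa_{\max}\|n_k\|}{1-\kappa_{\max}\|n_k\|}\,\|v_{\mathrm{tan}}\|,
\]
which is the claimed bound.

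\textbf{Main obstacle.} Step 2 is the delicate part: rigorously justifying $D\pi = (I-S_n)^{-1}P_T$. The standard route differentiates the defining condition $P_T(\pi(x))(x-\pi(x)) = 0$ and uses the Weingarten equation $D_v P_N \cdot n = -S_n v$. The uniform invertibility of $I-S_n$ comes from Assumption~\ref{ass:tubular_bound}.

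A second, subtler point is the remainder. The normal component of $g_k$ has size $\|n_k\|/t_k^2$, which can be large. I must therefore check that the second-order Taylor term $\eta_k^2 \|D^2\pi\|\,\|g_k\|^2$ is genuinely $O(\eta_k^2)$. This holds only if $\|n_k\|/t_k^2$ is bounded, which follows from the Normal Contraction proposition together with the step-size condition $\eta_k \lesssim t_k^2$. I would state this as the regime in which the $O(\eta_k^2)$ term is understood.
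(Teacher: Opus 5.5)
Your proposal is correct and follows the paper's proof essentially step for step: Taylor-expand the projection, use its differential $(I-\mathcal{S}_{n_k})^{-1}P_{T_k}$ so that only $P_{T_k}F(x_k)=v_{\mathrm{tan}}$ survives, and bound $\mathcal{E}_{drift}=[(I-\mathcal{S}_{n_k})^{-1}-I]v_{\mathrm{tan}}$. The paper writes this as $\mathcal{S}_{n_k}(I-\mathcal{S}_{n_k})^{-1}v_{\mathrm{tan}}$ with $\|(I-A)^{-1}\|\le(1-\|A\|)^{-1}$, which is your Neumann sum; it also silently projects onto the fixed manifold $\M_{t_k}$ and drops the $O(1)$ score residual and the $1/(1-t)$ curvature factor, so your fixed-manifold reading is the one that matches, whereas your alternative justification is off by an order, because a time step $|t_{k+1}-t_k|=O(\eta_k)$ moves the projection by a first-order amount (to leading order $\frac{t_k-t_{k+1}}{1-t_k}P_{N_k}\pi_k$) that cannot be absorbed into $O(\eta_k^2)$.
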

Finally, we characterize the fixed point of this dynamical system, establishing the connection between our geometric flow and Bayesian inference.
\begin{theorem}[Fixed-Point Characterization \& MAP Equivalence]
\label{thm:fixed_point}
Let $x^* = \pi^* + n^*$ be the instantaneous fixed point of the SGPP update rule at a fixed time $t \in (0, 1)$. Under Assumption 3.1, the equilibrium state is characterized by two decoupled conditions:

\begin{enumerate}
    \item \textbf{Normal Equilibrium (manifold safety):} The normal displacement $n^*$ is non-zero but suppressed quadratically by the time parameter:
    \begin{equation*}
        \|n^*\| \le \frac{t^2 \sigma_p^2(t)}{t^2 + \sigma_p^2(t)} C_N(t) \approx t^2 C_N(t) \quad (\text{as } t \to 0),
    \end{equation*}
    where $C_N(t)$ is the bounded forcing term defined in Proposition 3.3.

    \item \textbf{Tangential Equilibrium (MAP equivalence):} The projection $\pi^*$ satisfies the stationarity condition on the manifold surface:
    \begin{equation*}
        \nabla_T \log p_{\mathcal{M}_t}(\pi^*) = \frac{1}{\sigma_p^2(t)} P_{T_{\pi^*}}(\pi^* - (1-t)\xref).
    \end{equation*}
    In the limit as $t \to 0$, $\pi^*$ converges to the \textbf{manifold-constrained MAP estimator}:
    \begin{equation*}
        \pi^* \to \arg\max_{y \in \mathcal{M}_0} \left[ \log p_{\M_0}(y) - \frac{1}{2\sigma_p^2} \|y - \xref\|^2 \right].
    \end{equation*}
\end{enumerate}
\end{theorem}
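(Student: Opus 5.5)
The plan is to read both conditions off the fixed-point equation of the update~\eqref{eq:sgpp_update}. At a fixed point with fixed $t$ and $\eta>0$, the bracketed drift must vanish:
\begin{equation*}
s_\psi(x^*,t) = \frac{x^* - (1-t)\xref}{\sigma_p^2(t)}.
\end{equation*}
I will substitute the score decomposition of Proposition~\ref{prop:rf-score-decomp} at $x^*=\pi^*+n^*$. I will also split the fidelity residual $\pi^* + n^* - (1-t)\xref$ into its components in $T_{\pi^*}\M_t$ and in $N_{\pi^*}\M_t$. The normal vector $n^*$ lies entirely in $N_{\pi^*}\M_t$, because $\pi^*$ is the nearest point. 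Projecting the stationarity equation with $P_N$ and with $P_T$ then gives two separate equations, which is the claimed decoupling.

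\textbf{Normal part.} Applying $P_N$ gives
\begin{equation*}
n^*\Big(\tfrac{1}{t^2}+\tfrac{1}{\sigma_p^2(t)}\Big) = \tfrac12 H_t - \tfrac{1}{\sigma_p^2(t)}P_N\big(\pi^*-(1-t)\xref\big) + O(1).
\end{equation*}
The right-hand side is the forcing term collected in Proposition~3.3. Its norm is bounded by $C_N(t)$, using compactness of $\M_0$ (which bounds $\|H_t\|$ and $\|\pi^*\|$) and boundedness of $\xref$. Dividing by the coefficient gives
\begin{equation*}
\|n^*\|\le \frac{t^2\sigma_p^2(t)}{t^2+\sigma_p^2(t)}\,C_N(t).
\end{equation*}
As $t\to0$ we have $\sigma_p(t)\to\sigma_p>0$, so this behaves like $t^2C_N(t)$. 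The same bound can be obtained as the fixed point of the recursion in Proposition~3.3: setting $\|n_{k+1}\|=\|n_k\|$ gives $\|n\|\le \eta C_N/\lambda$, which is the same quantity.

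\textbf{Tangential part.} Applying $P_T$ removes the terms $n^*/t^2$ and $H_t$, since both lie in the normal space. What remains is
\begin{equation*}
\nabla_T\log p_{\M_t}(\pi^*) = \tfrac{1}{\sigma_p^2(t)}P_{T_{\pi^*}}\big(\pi^*-(1-t)\xref\big),
\end{equation*}
which is the stated condition. This step uses the fact that the $O(1)$ remainder has no tangential component at leading order; I would argue this from the derivation of Proposition~\ref{prop:rf-score-decomp}, where the remainder arises from normal-direction Laplace corrections. Equivalently, this is the vanishing of $v_{\mathrm{tan}}$ in the Tangential Drift proposition, with the drift error zero at equilibrium because it is proportional to $\|v_{\mathrm{tan}}\|$.

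\textbf{Limit $t\to0$.} The tangential condition says $\pi^*$ is a critical point of
\begin{equation*}
F_t(y)=\log p_{\M_t}(y)-\tfrac{1}{2\sigma_p^2(t)}\|y-(1-t)\xref\|^2
\end{equation*}
restricted to $\M_t$. As $t\to0$, $\M_t=(1-t)\M_0\to\M_0$ smoothly, $\sigma_p(t)\to\sigma_p$, and $p_{\M_t}$ is the pushforward of $p_{\M_0}$ under scaling. Reparametrizing $y=(1-t)y_0$ shows that $F_t\circ(1-t)$ converges in $C^1$ on the compact set $\M_0$ to the MAP objective. A compactness argument then shows that every limit point of $\pi^*$ is a critical point of the limiting objective. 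To upgrade this to the arg max, I would assume the maximizer is nondegenerate and unique, and track the branch of fixed points through the homotopy schedule; the implicit function theorem then keeps the branch on the maximizer.

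\textbf{Main obstacle.} I expect the main difficulty to be justifying that the $O(1)$ remainder is bounded uniformly as $t\to0$ and does not pollute the tangential equation. Without that, the decoupling holds only up to $O(1)$ tangential errors, which are not negligible against $1/\sigma_p^2$. I would handle this by writing the remainder explicitly from the Laplace expansion and absorbing any bounded tangential part into an $o(1)$ perturbation of the stationarity equation. That perturbation still vanishes in the limit once it is weighted by the step size in the homotopy argument.
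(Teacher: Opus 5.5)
Your proposal follows the paper's proof: set the total force to zero at $x^*$, insert the decomposition of Proposition~\ref{prop:rf-score-decomp}, and project with $P_N$, dividing by $1/t^2+1/\sigma_p^2(t)$, to bound $\|n^*\|$. Then project with $P_T$ for the tangential balance, and read the $t\to0$ limit as the first-order condition of the manifold-constrained MAP problem. The paper simply drops the remainder and equates that first-order condition with the $\arg\max$, so your caveats are extra rigor rather than a different route, and your point that going from a critical point to the $\arg\max$ needs a selection assumption (a nondegenerate maximizer and a tracked branch) is a real fix to a step the paper glosses over. One correction there: the remainder does have tangential parts (e.g.\ $\mathcal{S}_{n^*}\nabla_T\log p_{\M_t}(\pi^*)$), and what makes it negligible is the appendix's finer bound $O(\sigma+\|n\|)$, i.e.\ $O(t)$ at the fixed point, not step-size weighting, so the fixed-$t$ balance holds only up to $O(t)$, which still suffices for the limit.
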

Theorem \ref{thm:fixed_point} proves that the equilibrium of our dynamic system corresponds exactly to the Manifold-Constrained MAP estimator. However, unlike prior methods like MCG which require an explicit, computationally expensive projection operator $P_{\mathcal{M}}(x)$, SGPP implements this constraint implicitly. The pre-trained score field itself acts as the projection operator. 
\section{From Optimization to Sampling}
While Theorem \ref{thm:fixed_point} ensures geometric safety, the proximal update converges to the posterior \textit{mode} (MAP). In high dimensions, the mode often holds little probability mass and fails to represent the typical set, leading to over-smoothed results. To recover high-frequency textures and diversity, we must sample from the posterior distribution rather than optimize for its peak.

The theoretical posterior probability path is governed by the ODE:
\begin{equation} \label{eq:rf-ode}
    \frac{dx_t}{dt} = v_t(x_t\mid \xref) = -\frac{x_t}{1-t} - \frac{t}{1-t} \left( \nabla \log p_t(x_t) + \nabla \log p_t(\xref \mid x_t) \right), \quad t: 1 \to 0.
\end{equation}
Using the Gaussian likelihood $p_t(\xref \mid x_t) \propto \exp\left( -\|x_t - (1-t)\xref\|^2 / 2\sigma^2_p(t) \right)$, this formulation directly links our energy potential $J_t$ in (\ref{eq:sgpp-potential}) to the negative log-posterior.
While we prove in Appendix \ref{app:rf-ode-sampler} that this ODE targets the exact posterior, in practice it tends to collapse to the MAP as the guidance force overpowers the ``memory'' of the initial noise $X_1$. To preserve diversity, we adopt the stochastic sampler derived by \citet[Lemma A.4]{rout2024semanticimageinversionediting}:
\begin{equation} \label{eq:prf_sde}
    dX_t =  \left(-\frac{x_t}{1-t} - \frac{2t}{1-t} \nabla \log p_t(x_t \mid \xref)  \right) dt + \sqrt{\frac{2t}{1-t}} dB_t, \quad t: 1 \to 0,
\end{equation}
where $\{B_t\}$ is standard Brownian motion.

\section{Connection to RF-Inversion}

We reinterpret the control field in RF-Inversion \citep{rout2024semanticimageinversionediting} as sampling from a \textit{geometric mixture} distribution. Their control update corresponds to a convex combination of the generative and guidance scores:
\begin{equation} \label{eq:mixture_score}
    \nabla \log p^\eta_t(x_t \mid \xref) := (1-\eta) \nabla \log p_t(x_t) + \eta \nabla \log p_t(\xref \mid x_t)
\end{equation}
in the limit of $\sigma_p \to 0$ (see Appendix \ref{app:rf-inversion} for the proof).

This formulation highlights a continuum of control. As $\eta \to 1$, the prior vanishes, locking the trajectory to the reference path $\xref$ (hard guidance). While intermediate $\eta$ values allow for "soft guidance" they regulate only the \textit{direction} of the update, not the \textit{validity} of the constraint. In standard RF-Inversion (where implicit $\sigma_p \to 0$), high $\eta$ creates adversarial gradients: the text score pushes for change while the reference score enforces strict stasis, causing artifacts.

Our proximal variance $\sigma_p$ resolves this by introducing \textit{geometric tolerance}. It effectively widens the manifold tube, making the constraint ``elastic.'' This allows the semantic drive ($\eta$) to satisfy the prompt without violating the structural essence of the reference image.

\bibliography{iclr2026_delta}

@misc{chung2024improvingdiffusionmodelsinverse,
      title={Improving Diffusion Models for Inverse Problems using Manifold Constraints}, 
      author={Hyungjin Chung and Byeongsu Sim and Dohoon Ryu and Jong Chul Ye},
      year={2024},
      eprint={2206.00941},
      archivePrefix={arXiv},
      primaryClass={cs.LG},
      url={https://arxiv.org/abs/2206.00941}, 
}

@misc{chung2024diffusionposteriorsamplinggeneral,
      title={Diffusion Posterior Sampling for General Noisy Inverse Problems}, 
      author={Hyungjin Chung and Jeongsol Kim and Michael T. Mccann and Marc L. Klasky and Jong Chul Ye},
      year={2024},
      eprint={2209.14687},
      archivePrefix={arXiv},
      primaryClass={stat.ML},
      url={https://arxiv.org/abs/2209.14687}, 
}

@misc{rout2024semanticimageinversionediting,
      title={Semantic Image Inversion and Editing using Rectified Stochastic Differential Equations}, 
      author={Litu Rout and Yujia Chen and Nataniel Ruiz and Constantine Caramanis and Sanjay Shakkottai and Wen-Sheng Chu},
      year={2024},
      eprint={2410.10792},
      archivePrefix={arXiv},
      primaryClass={cs.LG},
      url={https://arxiv.org/abs/2410.10792}, 
}

@misc{liu2022flowstraightfastlearning,
      title={Flow Straight and Fast: Learning to Generate and Transfer Data with Rectified Flow}, 
      author={Xingchao Liu and Chengyue Gong and Qiang Liu},
      year={2022},
      eprint={2209.03003},
      archivePrefix={arXiv},
      primaryClass={cs.LG},
      url={https://arxiv.org/abs/2209.03003}, 
}

@misc{kulikov2025floweditinversionfreetextbasedediting,
      title={FlowEdit: Inversion-Free Text-Based Editing Using Pre-Trained Flow Models}, 
      author={Vladimir Kulikov and Matan Kleiner and Inbar Huberman-Spiegelglas and Tomer Michaeli},
      year={2025},
      eprint={2412.08629},
      archivePrefix={arXiv},
      primaryClass={cs.CV},
      url={https://arxiv.org/abs/2412.08629}, 
}

@misc{kim2025flowaligntrajectoryregularizedinversionfreeflowbased,
      title={FlowAlign: Trajectory-Regularized, Inversion-Free Flow-based Image Editing}, 
      author={Jeongsol Kim and Yeobin Hong and Jonghyun Park and Jong Chul Ye},
      year={2025},
      eprint={2505.23145},
      archivePrefix={arXiv},
      primaryClass={cs.CV},
      url={https://arxiv.org/abs/2505.23145}, 
}

@misc{liu2026improvingeuclideandiffusiongeneration,
      title={Improving the Euclidean Diffusion Generation of Manifold Data by Mitigating Score Function Singularity}, 
      author={Zichen Liu and Wei Zhang and Tiejun Li},
      year={2026},
      eprint={2505.09922},
      archivePrefix={arXiv},
      primaryClass={cs.LG},
      url={https://arxiv.org/abs/2505.09922}, 
}

@misc{labs2025flux1kontextflowmatching,
      title={FLUX.1 Kontext: Flow Matching for In-Context Image Generation and Editing in Latent Space},
      author={Black Forest Labs and Stephen Batifol and Andreas Blattmann and Frederic Boesel and Saksham Consul and Cyril Diagne and Tim Dockhorn and Jack English and Zion English and Patrick Esser and Sumith Kulal and Kyle Lacey and Yam Levi and Cheng Li and Dominik Lorenz and Jonas Müller and Dustin Podell and Robin Rombach and Harry Saini and Axel Sauer and Luke Smith},
      year={2025},
      eprint={2506.15742},
      archivePrefix={arXiv},
      primaryClass={cs.GR},
      url={https://arxiv.org/abs/2506.15742},
}

@misc{song2021scorebasedgenerativemodelingstochastic,
      title={Score-Based Generative Modeling through Stochastic Differential Equations}, 
      author={Yang Song and Jascha Sohl-Dickstein and Diederik P. Kingma and Abhishek Kumar and Stefano Ermon and Ben Poole},
      year={2021},
      eprint={2011.13456},
      archivePrefix={arXiv},
      primaryClass={cs.LG},
      url={https://arxiv.org/abs/2011.13456}, 
}
\bibliographystyle{iclr2026_delta}

\appendix

\section{Related Works}

\textbf{Rectified Flows and Inversion-Based Editing.}
Rectified Flows (RF) \citep{liu2022flowstraightfastlearning} straighten the transport paths between data and noise, enabling faster and more stable sampling compared to standard diffusion models. Recent advances have extended RFs to controlled generation via inversion. Notably, \cite{rout2024semanticimageinversionediting} introduced \textit{RF-inversion} for semantic editing, deriving a control term that forces generation to retrace the inverted noise path. Our work unifies this perspective: we demonstrate that the control mechanism by \cite{rout2024semanticimageinversionediting} is mathematically equivalent to the \textit{``hard guidance''} limit ($\sigma_p \to 0$) of our proximal framework. By relaxing this to ``soft guidance'' ($\sigma_p > 0$), SGPP introduces a geometric flexibility that allows the model to correct artifacts that rigid inversion would otherwise preserve.

\textbf{Inversion-Free Editing (FlowEdit vs. SGPP).}
A parallel class of methods bypasses inversion entirely by manipulating vector fields directly. {FlowEdit} \citep{kulikov2025floweditinversionfreetextbasedediting} and {FlowAlign} \citep{kim2025flowaligntrajectoryregularizedinversionfreeflowbased} construct a direct ODE mapping by applying the difference between target and source velocity fields ($v_{tgt} - v_{src}$). However, these methods rely on \textit{semantic subtraction}: they require a user-provided \textit{source prompt} to define the structure to be preserved. In contrast, SGPP enforces structure via \textit{geometric projection}: we utilize the reference image itself as a proximal constraint. This renders SGPP a \textit{zero-shot} method, capable of preserving identity and high-frequency details without requiring auxiliary text descriptions or prompt tuning.

\textbf{Diffusion Posterior Sampling (DPS) and Jacobian-Free Guidance.}
Solving inverse problems with pre-trained models is a cornerstone of training-free generative AI. {DPS} \citep{chung2024diffusionposteriorsamplinggeneral} approximates the posterior score via a likelihood gradient $\nabla_{x_t} \log p(y|x_t)$, typically formulated as the terminal data consistency $\| y - \hat{x}_0(x_t) \|^2$. While effective for general operators, DPS requires backpropagating through the denoising network to compute the Jacobian $\nabla_{x_t} \hat{x}_0$, which is computationally expensive and notoriously unstable at high noise levels.
Our approach differs fundamentally in objective and stability. Instead of terminal consistency, SGPP optimizes \textbf{trajectory consistency} $\| x_t - (1-t)\xref \|^2$. Leveraging the linear geometry of Rectified Flows, we derive a closed-form, \textit{Jacobian-free guidance term} (Eq. 3). This eliminates the need for backward passes through the network, avoiding the ``exploding gradient'' problem of DPS while providing a rigorous geometric guarantee of \textbf{Normal Contraction} (Prop 3.3) that heuristic approximations lack.

\paragraph{Connection to Manifold Constraints.}
Our framework shares conceptual roots with \textit{Manifold Constrained Gradients (MCG)} \cite{chung2024improvingdiffusionmodelsinverse}, which seeks the Maximum A Posteriori (MAP) estimate constrained to the data manifold. However, where MCG relies on explicit—and often unstable—projections onto an approximated tangent space, SGPP provides an \textit{implicit} implementation. By exploiting the natural decomposition of the Rectified Flow score (Prop 3.2), SGPP achieves the equivalent ``Manifold-Constrained MAP'' solution (Theorem 3.5) purely through the proximal update, ensuring geometric stability without the need for manual projection steps or manifold approximations.

\section{Experiments}
\label{sec:experiments}

We validate Score-Guided Proximal Projection (SGPP) in two regimes: (1) a controlled 2D geometric experiment to verify the Normal Contraction property and stability against baselines, and (2) high-resolution semantic editing and recovery using the state-of-the-art FLUX Rectified Flow model.

\subsection{Geometric Validation: The Normal Contraction Property}

To rigorously test the geometric stability of our method, we evaluate it on a 2D ``two-moons'' manifold distribution. We compare SGPP against two established baselines: Diffusion Posterior Sampling (DPS)and Inversion-Based Editing (RF-Inversion) .

\textbf{Baselines vs. SGPP.} Existing methods struggle with the geometry of the manifold in distinct ways:
\begin{itemize}
    \item \textbf{DPS Instability:} DPS approximates the likelihood score via backpropagation through the denoising network. As seen in our comparisons, it is highly sensitive to the noise scale $\sigma$. At high noise levels, the gradients often explode or misguide the trajectory, leading to points that overshoot the manifold or fail to converge.
    \item \textbf{Geometric Locking in RF-Inversion:} Standard inversion demonstrates the ``geometric locking'' phenomenon. We observe that unless the guidance is stopped very early ($t=0.1$), the trajectories collapse entirely to the reference input $\xref$. This confirms that strict inversion leaves insufficient degrees of freedom for generative corrections.
\end{itemize}

\textbf{SGPP Behavior.} In contrast, SGPP exhibits robust convergence. The deterministic update (\ref{eq:sgpp_update}) effectively ``snaps'' out-of-distribution points onto the manifold spine due to the restoring force derived in Proposition 3.3. The stochastic variant (SGPP-SDE) maintains this geometric safety while correctly sampling the posterior distribution, covering the manifold density rather than collapsing to a single mode.

\subsection{Zero-Shot Semantic Editing}

We apply SGPP to zero-shot semantic editing, a task requiring a delicate balance between preserving the source identity and hallucinating new semantic structures (e.g., transforming a ``cat'' into a ``lion''). We use the FLUX model with a target prompt $y_{tgt} =$ ``a lion''.

\textbf{Overcoming Geometric Locking.} We compare our method against standard RF-Inversion. The baseline fails to generate meaningful semantic changes; the strong geometric constraint forces the output to retain the exact shape and contours of the house cat, resulting in a texture-swapped hybrid rather than a true lion.

\textbf{Soft Guidance.} SGPP overcomes this by relaxing the proximal constraint. By setting $\sigma_p = 0.2$ and using a geometric score mixture ($\eta=0.8$), our method permits the trajectory to deviate tangentially from the reference path. This ``Soft Guidance'' allows the model to hallucinate the necessary structural changes (e.g., the lion's mane and broader muzzle) while the proximal term ensures the pose and background remain consistent with the reference. Notably, this is achieved without any inversion step or auxiliary control networks.

\subsection{The Fidelity-Realism Trade-off}

Finally, we analyze the effect of the proximal variance $\sigma_p$ on the reconstruction quality. This hyperparameter acts as a direct ``knob'' for the perception-distortion trade-off:

\begin{itemize}
    \item \textbf{Tight Constraint ($\sigma_p \to 0$):} At low variance (e.g., $\sigma_p = 0.01$), the method recovers the reference image almost exactly, correcting only minor artifacts. This corresponds to the ``Hard Guidance'' limit.
    \item \textbf{Generative Freedom ($\sigma_p > 0$):} As we increase $\sigma_p$ (e.g., to $0.2$ or $0.5$), the model gains the freedom to hallucinate high-frequency details that are statistically likely under the prior but absent in the reference.
\end{itemize}

This confirms that SGPP provides a continuous, controllable spectrum between strict fidelity (reconstruction) and unconstrained generation (realism), validating our theoretical unification of optimization and sampling.
\begin{figure}[ht]
    \centering
    
    \begin{subfigure}{\textwidth}
        \centering
        \includegraphics[width=\linewidth]{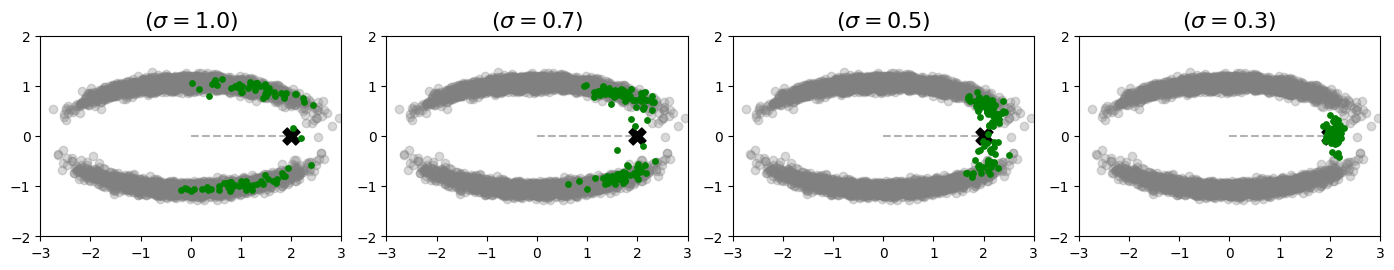}
        \caption{Samples generated by DPS for different $\sigma$ values with the observation model $\xref = x_0 + \sigma Z$, where $Z \sim \N(0, I)$}
    \end{subfigure}

    \begin{subfigure}{\textwidth}
        \centering
        \includegraphics[width=\linewidth]{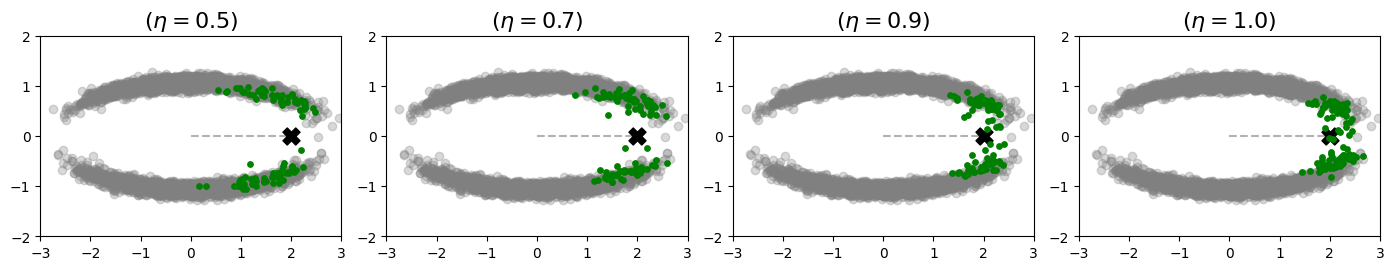}
        \caption{Samples generated by RF-Inversion with inversion guidance $\gamma=0.5$ and editing guidance $\eta = 0.8$. We stop the guidance at $t=0.1$, otherwise all samples collapse to $\xref$}
    \end{subfigure}
    
    \begin{subfigure}{\textwidth}
        \centering
        \includegraphics[width=\linewidth]{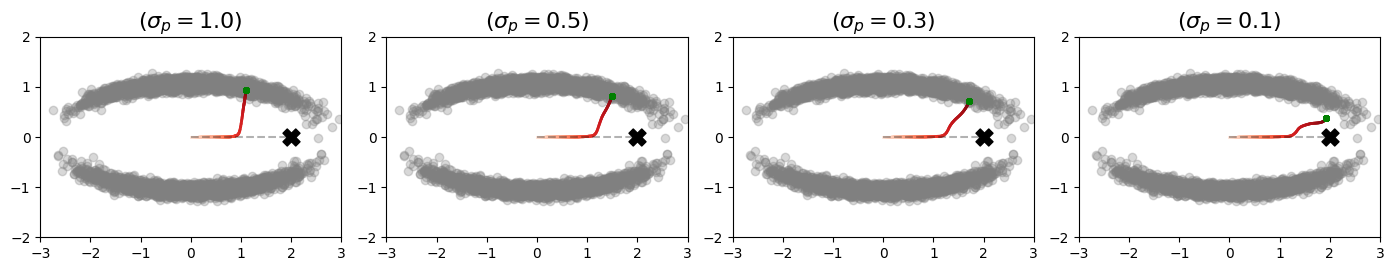}
        \caption{\textbf{(Ours)} Trajectories of SGPP (the proximal gradient ascent update in (\ref{eq:sgpp_update})) leading to the posterior modes for varying values of proximal variance}
    \end{subfigure}

    \begin{subfigure}{\textwidth}
        \centering
        \includegraphics[width=\linewidth]{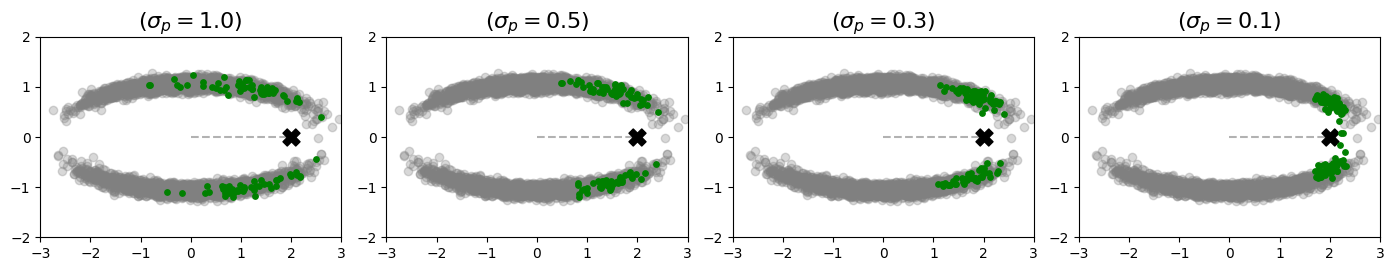}
        \caption{\textbf{(Ours)} Samples generated by the SGPP-SDE (\ref{eq:prf_sde}) for varying values of proximal variance}
    \end{subfigure}

    \caption{Camparison of different methods (a) DPS (b) RF-inversion (c) SGPP (deterministic optimizer) (d) SGPP-SDE}
    \label{fig:four_images_stacked}
\end{figure}

\begin{figure}[ht]
    \centering
    \begin{subfigure}[b]{0.3\textwidth}
        \centering
        \includegraphics[width=\linewidth]{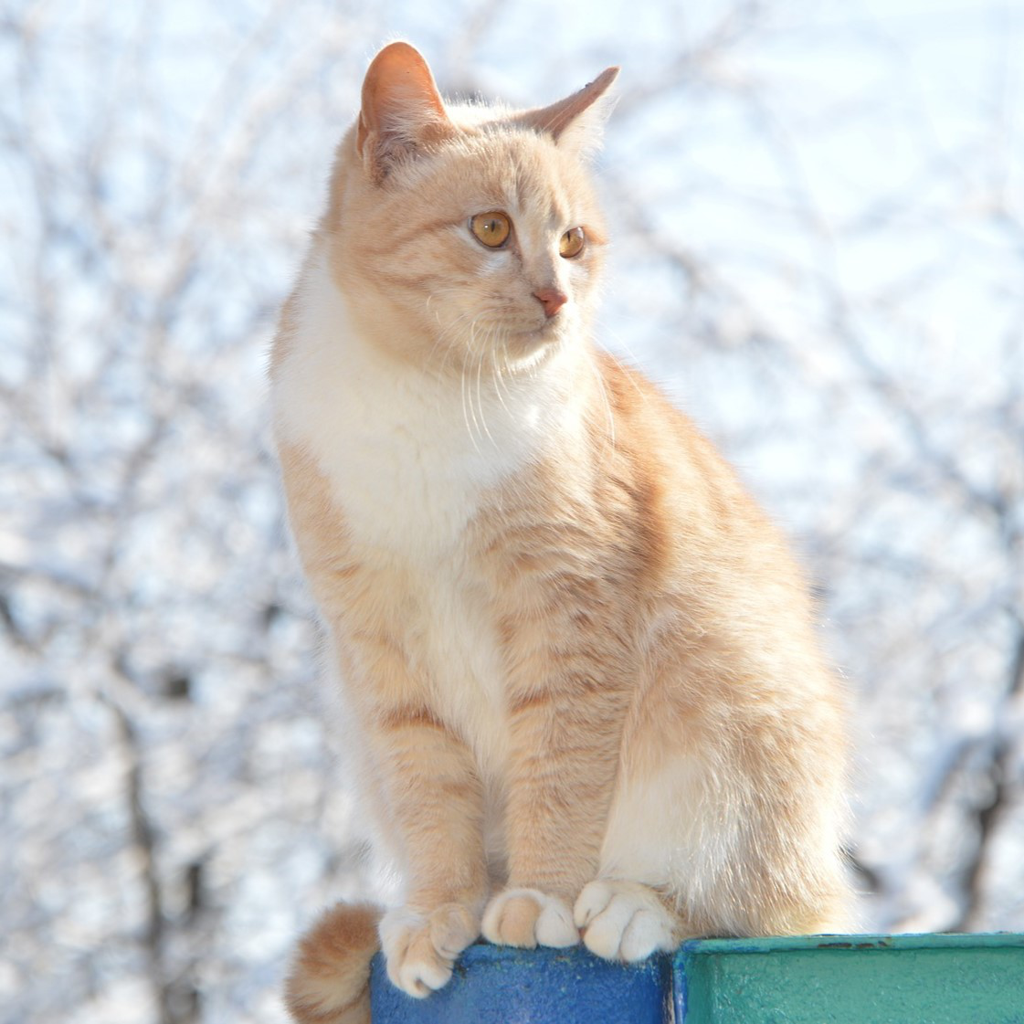}
        \caption{Original cat image}
    \end{subfigure}
    \hfill 
    \begin{subfigure}[b]{0.3\textwidth}
        \centering
        \includegraphics[width=\linewidth]{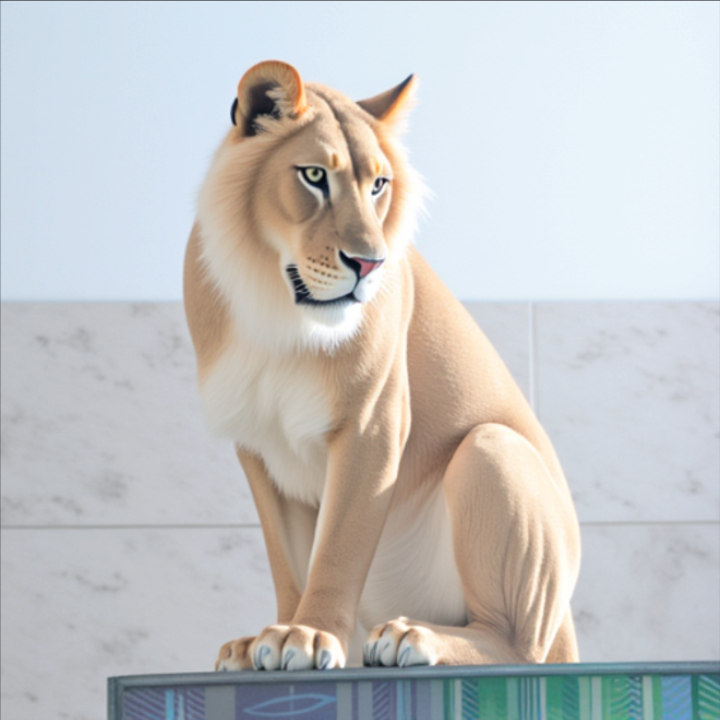}
        \caption{RF-inversion}
    \end{subfigure}
    \hfill 
    \begin{subfigure}[b]{0.3\textwidth}
        \centering
        \includegraphics[width=\linewidth]{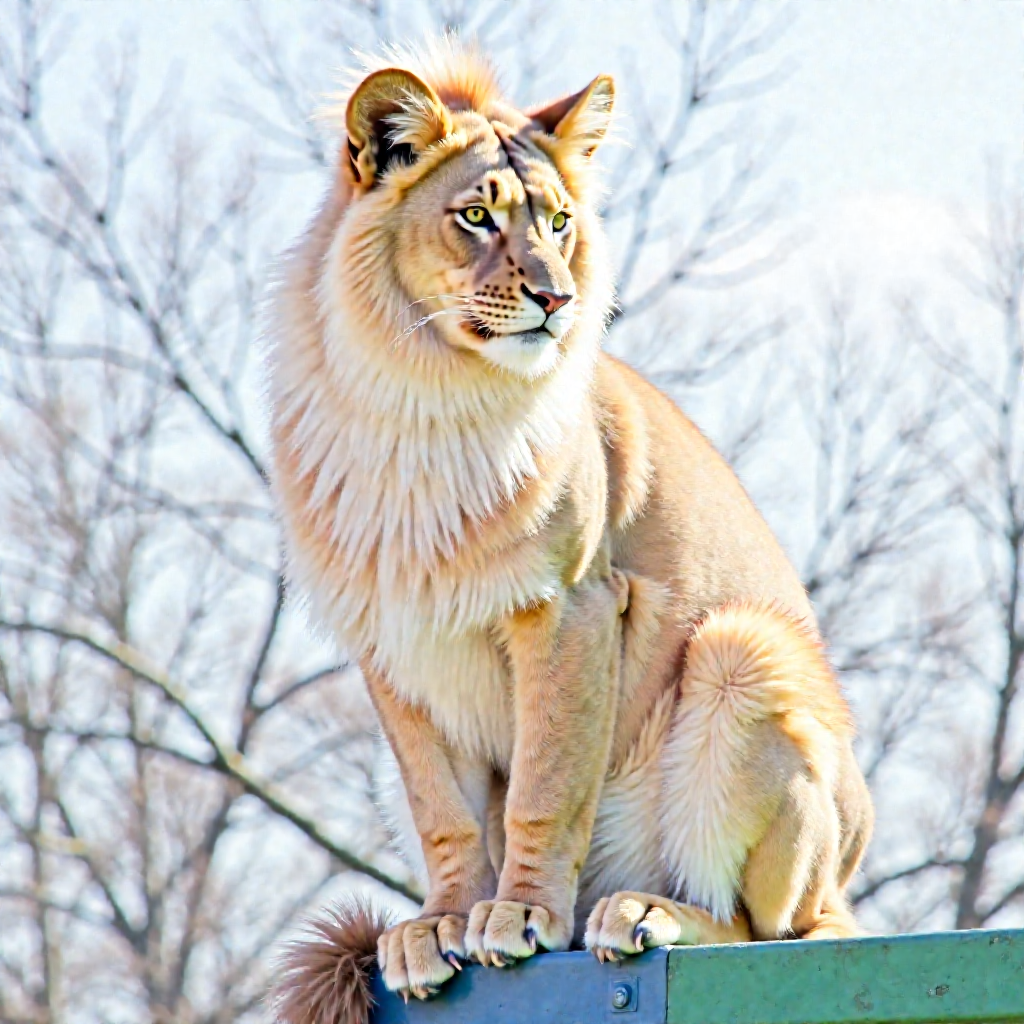}
        \caption{\textbf{(Ours) Proximal RF-SDE}}
    \end{subfigure}

    \caption{\textbf{Semantic Editing}. We use the FLUX model with the target prompt ``a lion". (b) We extract the RF-inversion edited image from their standard implementation. (c) We use our proximal SGPP-SDE with the geometric mixture score at $\eta=0.8$ and $\sigma_p = 0.2$ and CFG guidance 3.5. We do \textbf{not} use inversion and stop the proximal guidance at $t_{stop}=15/28$.}
\end{figure}

\begin{figure}[ht]
    \centering
    \begin{subfigure}[b]{0.22\textwidth}
        \centering
        \includegraphics[width=\linewidth]{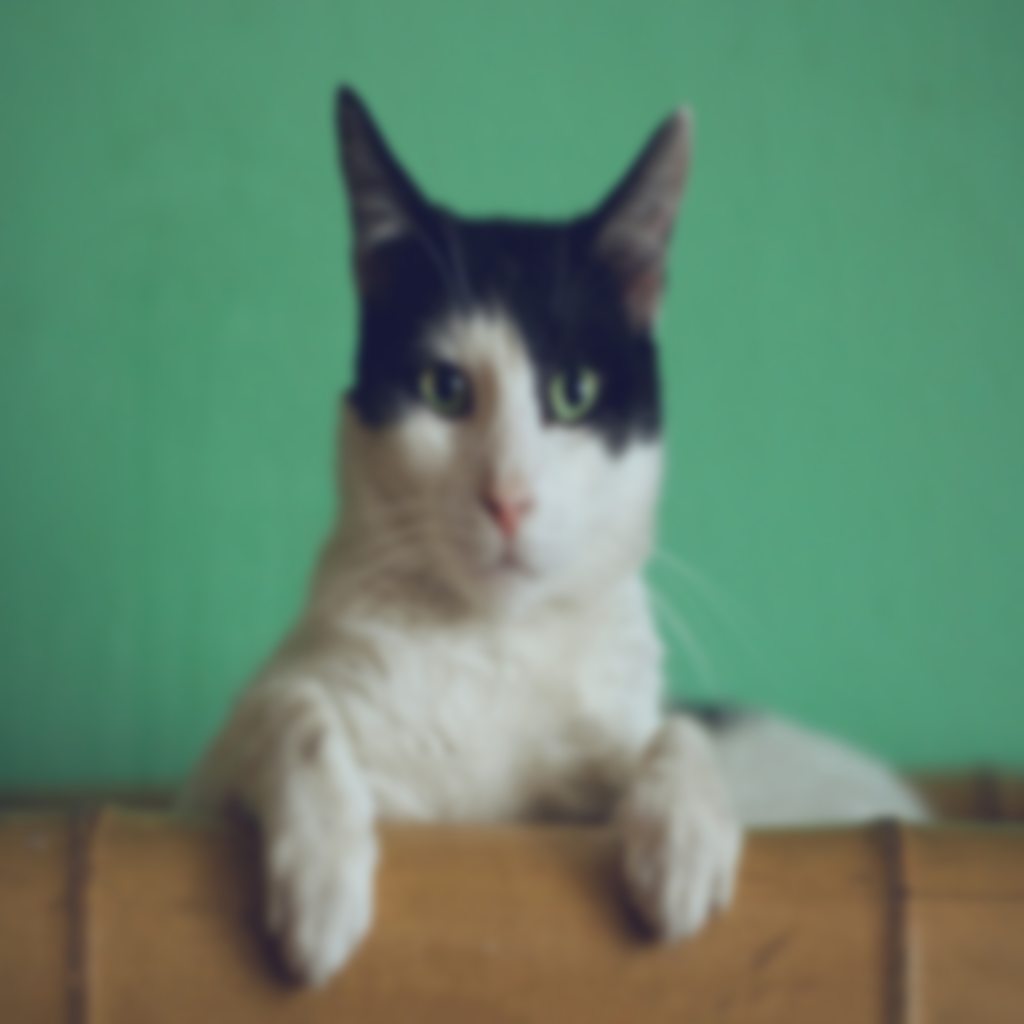}
        \caption{Original cat image}
    \end{subfigure}
    \hfill 
    \begin{subfigure}[b]{0.22\textwidth}
        \centering
        \includegraphics[width=\linewidth]{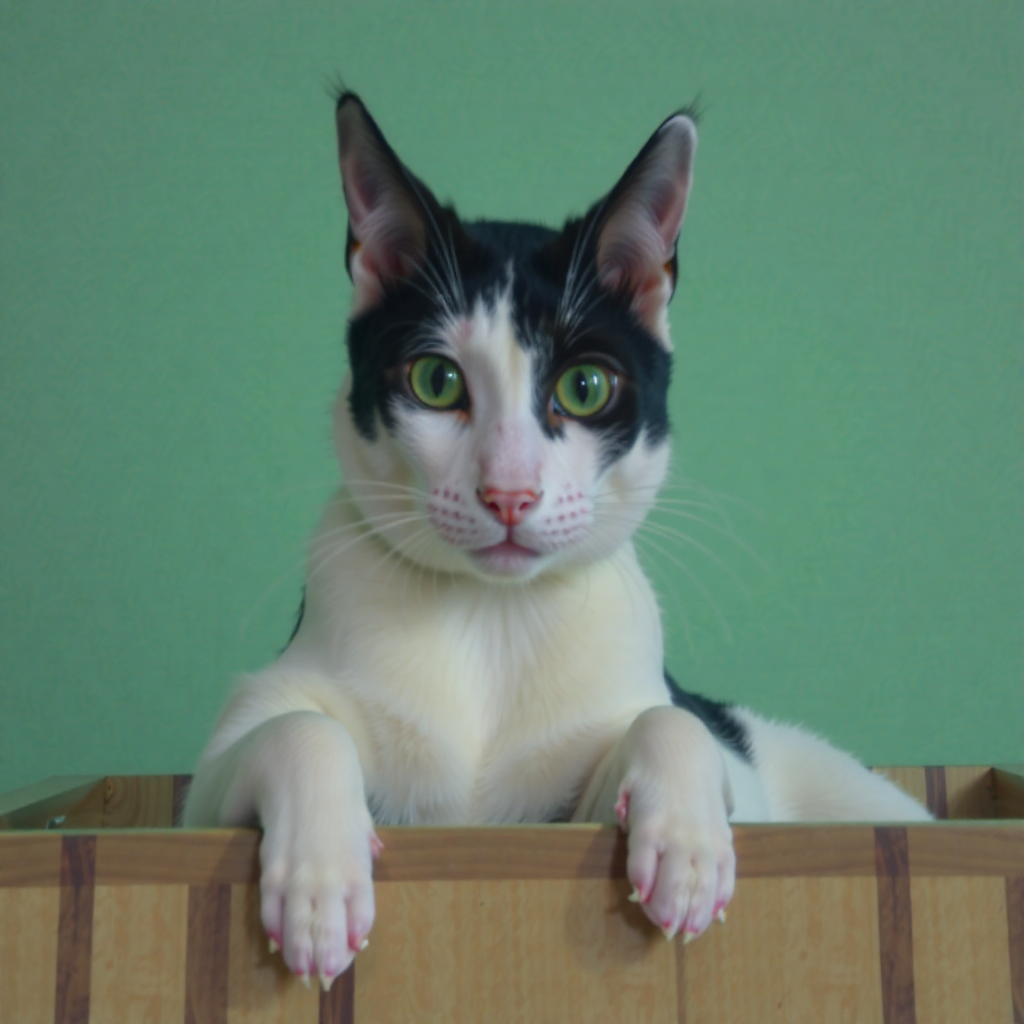}
        \caption{$\sigma_p = 0.5$}
    \end{subfigure}
    \hfill 
    \begin{subfigure}[b]{0.22\textwidth}
        \centering
        \includegraphics[width=\linewidth]{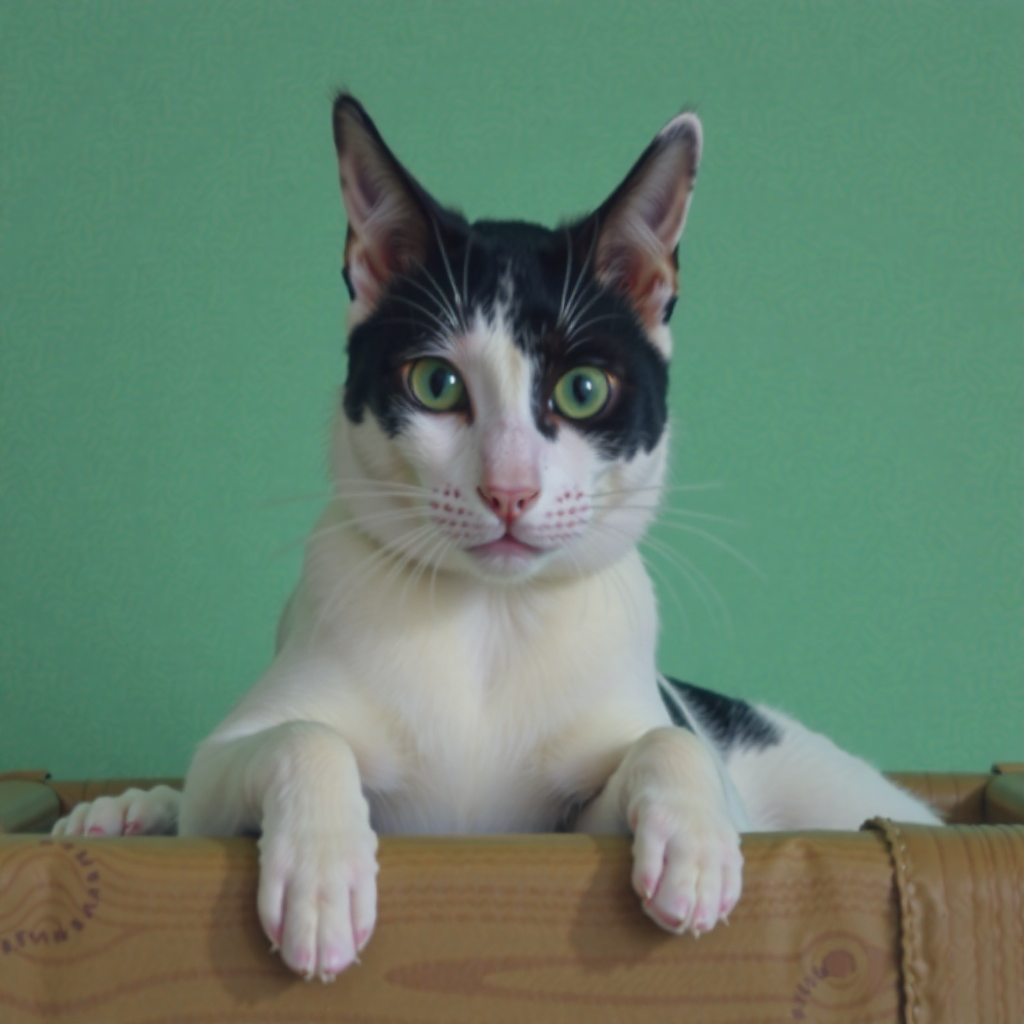}
        \caption{$\sigma_p = 0.2$}
    \end{subfigure}
    \hfill
    \begin{subfigure}[b]{0.22\textwidth}
        \centering
        \includegraphics[width=\linewidth]{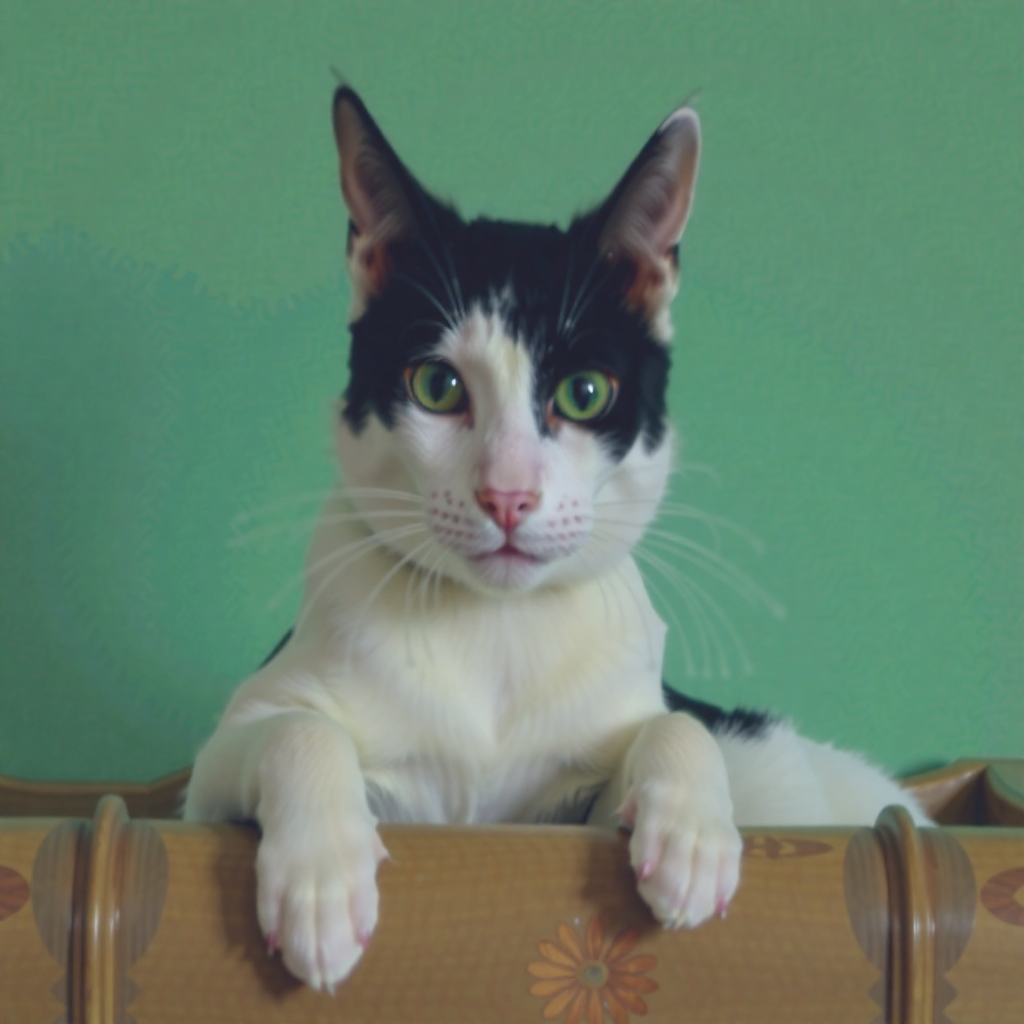}
        \caption{$\sigma_p = 0.01$}
    \end{subfigure}

    \caption{\textbf{Reconstruction.} We use the FLUX model with no target prompt.  We use our proximal SGPP-SDE with the the posterior score as in (\ref{eq:prf_sde}). We do \textbf{not} use inversion and stop the proximal guidance at $t_{stop}=0.6$.}
\end{figure}

\section{Background on Local geometry}

\subsection{Local Quadratic Approximation}
To characterize the curvature, we view the manifold locally as a quadratic surface. For any point $y \in \mathcal{M}$, we can represent nearby points on the manifold using a local coordinate chart. Let $v$ be a small vector in the tangent space $T_y \mathcal{M}$. The position of a point on the manifold, denoted $\phi(v)$, can be approximated via a Taylor expansion:
\begin{equation*}
    \phi(v) = y + v + \frac{1}{2}\Pi_y(v, v) + O(\|v\|^3).
\end{equation*}
Here, $\Pi_y: T_y \mathcal{M} \times T_y \mathcal{M} \to N_y\mathcal{M} $ is the \textit{Second Fundamental Form}. Intuitively, it represents the quadratic ``bending" of the manifold away from its tangent plane. It takes a tangent direction $v$ and outputs a normal vector describing how the surface curves in that direction.

We define the \textit{Mean Curvature Vector} $H(y)$ as the trace of this quadratic term:
\begin{equation*}
    H(y) := \sum_{i=1}^d \Pi_y(e_i, e_i),
\end{equation*}
where $\{e_i\}_{i=1}^d$ is any orthonormal basis of the tangent space. Physically, $H(y)$ points in the average direction of the manifold's curvature (e.g., towards the center of a sphere).

\subsection{Curvature Interaction}
To analyze how the geometry affects the diffusion score, we must quantify how the curvature interacts with the normal distance. For a fixed normal vector $n$, we define the \textit{Shape Operator} $\mathcal{S}_n$ as the linear map on the tangent space that satisfies:
\begin{equation*}
    \langle v, \mathcal{S}_n v \rangle = \langle n, \Pi_y(v, v) \rangle.
\end{equation*}
This operator scales tangent vectors based on the principal curvatures in the direction of $n$. The condition that a point $x$ lies within the reach $\tau$ is equivalent to the condition that $\|\mathcal{S}_{n(x)}\|_{\text{op}} < 1$. This ensures that the normal fibers do not cross, preventing the formation of singularities in the projection map.

\begin{proposition}
Let $x \in \mathcal{T}_\tau$ be a point within the valid tube of the manifold $\mathcal{M}_0$. Consider the marginal density $p_{\sigma}(x)$ of the model $X = Y + \sigma Z$, where $Y \sim p_{\mathcal{M}_0}$ and $Z \sim \mathcal{N}(0, I_D)$. As $\sigma \to 0$, the score decomposes as:
\begin{equation*}
    \nabla_x \log p_{\sigma}(x) = -\frac{n}{\sigma^2} + \nabla_T \log p_{\mathcal{M}_0}(\pi(x)) + \frac{1}{2}H(\pi(x)) + O(\sigma + \|n\|)
\end{equation*}
where $n = x - \pi(x)$ is the normal vector, $\nabla_T$ is the intrinsic Riemannian gradient, and $H$ is the mean curvature vector.
\end{proposition}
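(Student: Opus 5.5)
We work with the Tweedie representation of the score. For $p_\sigma(x)=\int_{\mathcal{M}_0}\varphi_\sigma(x-y)\,p_{\mathcal{M}_0}(y)\,d\mathrm{vol}(y)$, where $\varphi_\sigma$ is the $\N(0,\sigma^2 I)$ density, we have
\begin{equation*}
    \nabla_x \log p_\sigma(x) = \frac{\E[Y\mid X=x]-x}{\sigma^2}.
\end{equation*}
So the task reduces to expanding the posterior mean of $Y$ to order $\sigma^2$. Write $x=\pi+n$ with $\pi=\pi(x)$. Parametrize $\mathcal{M}_0$ near $\pi$ by the chart $\phi(v)=\pi+v+\tfrac12\Pi_\pi(v,v)+O(\|v\|^3)$ from the Local Quadratic Approximation. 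Since $\langle n,v\rangle=0$, this gives
\begin{equation*}
    \|x-\phi(v)\|^2=\|n\|^2+\|v\|^2-\langle n,\Pi_\pi(v,v)\rangle+O(\|v\|^3+\|n\|\|v\|^3)
    =\|n\|^2+\langle v,(I-\mathcal{S}_n)v\rangle+O(\|v\|^3).
\end{equation*}
Assumption~\ref{ass:tubular_bound} gives $\|\mathcal{S}_n\|_{\mathrm{op}}\le 1-\delta$, so $I-\mathcal{S}_n$ is uniformly positive definite. The integrand is therefore a nondegenerate Gaussian in $v$ of scale $\sigma$. By compactness and the reach condition, the contribution from outside a fixed chart neighborhood is exponentially small in $1/\sigma^2$, and we discard it.

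Next we rescale $v=\sigma u$ and Taylor expand. We expand the density as $p_{\mathcal{M}_0}(\phi(\sigma u))=p(\pi)\bigl(1+\sigma\langle\nabla_T\log p(\pi),u\rangle+O(\sigma^2)\bigr)$. We also expand the volume factor $\sqrt{\det g}=1+O(\sigma^2\|u\|^2)$ and the cubic remainder $e^{-O(\sigma\|u\|^3)}=1-O(\sigma\|u\|^3)$. The displacement is $\phi(\sigma u)-\pi=\sigma u+\tfrac{\sigma^2}{2}\Pi_\pi(u,u)+O(\sigma^3)$. Taking the ratio of Gaussian integrals, with weight $e^{-\langle u,(I-\mathcal{S}_n)u\rangle/2}$, odd moments vanish. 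This yields
\begin{equation*}
    \E[Y\mid x]-\pi=\sigma^2\,(I-\mathcal{S}_n)^{-1}\nabla_T\log p(\pi)+\frac{\sigma^2}{2}\,\E_u[\Pi_\pi(u,u)]+O(\sigma^3).
\end{equation*}
The second term uses $\E_u[\Pi(u,u)]=\sum_{ij}[(I-\mathcal{S}_n)^{-1}]_{ij}\Pi(e_i,e_j)$. Odd cubic corrections, such as those from the cubic chart term and the cubic exponent remainder, pair with the first-order term and contribute $O(\sigma^3)$ after the $\sigma^{-2}$ division, leaving $O(\sigma)$.

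To finish, we substitute into Tweedie's formula: $(\E[Y\mid x]-x)/\sigma^2=-n/\sigma^2+(\E[Y\mid x]-\pi)/\sigma^2$. Replacing $(I-\mathcal{S}_n)^{-1}=I+O(\|n\|)$ turns the tangential term into $\nabla_T\log p_{\mathcal{M}_0}(\pi)+O(\|n\|)$. The curvature term becomes $\tfrac12\sum_i\Pi(e_i,e_i)+O(\|n\|)=\tfrac12H(\pi)+O(\|n\|)$. The remaining errors are $O(\sigma)$, which gives the claimed remainder $O(\sigma+\|n\|)$.

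The main obstacle is uniformity of the Laplace-type expansion. The remainder constants must not blow up as $\|n\|$ approaches the reach. This is exactly where the bound $\tau\kappa_{\max}\le1-\delta$ is needed: it keeps $(I-\mathcal{S}_n)^{-1}$ and the Gaussian tails uniformly controlled. Also needed is a compactness argument bounding the higher derivatives of the chart, of $\log p_{\mathcal{M}_0}$, and of the metric. I would verify the bookkeeping of the odd third-order moments carefully, since an error there would spoil the $O(\sigma)$ remainder.
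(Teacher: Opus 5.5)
Your argument follows the paper's route: Tweedie's formula, then the quadratic chart $\phi(v)=\pi+v+\tfrac12\Pi_\pi(v,v)+R(v)$ with $R(v)=O(\|v\|^3)$, then a Gaussian/Laplace approximation in $v$ with precision $(I-\mathcal{S}_n)/\sigma^2$, and finally $(I-\mathcal{S}_n)^{-1}=I+O(\|n\|)$ together with $\mathrm{tr}\,\Pi_\pi=H$. You are more careful than the paper, which declares the tangential posterior exactly Gaussian after dropping an $O(\|v\|^3/\sigma^2)$ energy term and never returns to it.

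The step where you go beyond the paper, the cubic exponent remainder, is wrong as written. Your expansion is $\|x-\phi(v)\|^2=\|n\|^2+\langle v,(I-\mathcal{S}_n)v\rangle+O(\|v\|^3)$. After rescaling, this remainder is a cubic form $\sigma c(u)$. Pairing it with the leading displacement $\sigma u$ produces, up to sign, $\sigma^2\,\E_u[u\,c(u)]$. Odd times odd is even, so this fourth moment does not vanish in general. It is therefore an $O(\sigma^2)$ contribution to $\E[Y\mid x]-\pi$, hence $O(1)$ in the score, the same order as $\nabla_T\log p_{\mathcal{M}_0}$ and $\tfrac12 H$. It is not the $O(\sigma^3)$ you claim. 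If such a pure cubic term were really present, the proposition would fail; the paper's proof has the same hole.

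The missing observation is that no such term exists. You used $\langle n,v\rangle=0$ but not the companion orthogonality $\langle v,\Pi_\pi(v,v)\rangle=0$, which holds because $v\in T_\pi\mathcal{M}_0$ and $\Pi_\pi(v,v)\in N_\pi\mathcal{M}_0$. In $\|n-v-\tfrac12\Pi_\pi(v,v)-R(v)\|^2$ this is the only cross term cubic in $v$ that does not involve $n$. The terms $\langle v,R(v)\rangle$ and $\|\Pi_\pi(v,v)\|^2$ are quartic. Hence $\|x-\phi(v)\|^2=\|n\|^2+\langle v,(I-\mathcal{S}_n)v\rangle-2\langle n,R(v)\rangle+O(\|v\|^4)$.

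So the cubic part of the rescaled exponent is $O(\sigma\|n\|\|u\|^3)$. Its pairing with $\sigma u$ contributes $O(\sigma^2\|n\|)$ to $\E[Y\mid x]-\pi$, i.e.\ $O(\|n\|)$ to the score. The other corrections (volume factor, second-order density terms, quartic exponent terms, cubic chart term) are relatively $O(\sigma^2)$ and do give $O(\sigma^3)$, as you say.

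The claimed remainder $O(\sigma+\|n\|)$ therefore survives. However, this term belongs to the $O(\|n\|)$ budget, not to your ``remaining errors are $O(\sigma)$.'' Correct the expansion and that sentence, and the proof goes through.
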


\begin{proof}
The proof utilizes {Tweedie's Formula}, which relates the score of a Gaussian convolution to the posterior expectation of the clean variable $Y$:
\begin{equation}
    \label{eq:tweedie}
    \nabla_x \log p_{\sigma}(x) = \frac{1}{\sigma^2} \left( \mathbb{E}[Y \mid X=x] - x \right).
\end{equation}
We proceed by computing the conditional expectation $\mathbb{E}[Y \mid x]$ via a Laplace approximation on the tangent space.

\paragraph{Local Geometry Setup.}
Let $\bar{y} = \pi(x)$. We parameterize the manifold locally using the exponential map $\exp_{\bar{y}}: T_{\bar{y}}\mathcal{M}_0 \to \mathcal{M}_0$. A point $y$ near $\bar{y}$ is represented by a tangent vector $v \in \mathbb{R}^d$ such that:
\begin{equation*}
    y(v) = \bar{y} + v + \frac{1}{2}\Pi(v, v) + O(\|v\|^3),
\end{equation*}
where $\Pi$ is the Second Fundamental Form. The observation $x$ decomposes as $x = \bar{y} + n$.

\paragraph{The Posterior Distribution.}
The posterior density $q(v \mid x)$ on the tangent space is proportional to the likelihood times the prior. Note that the Riemannian volume element expansion $dV_y = (1 + O(\|v\|^2)) dV_v$ implies the Jacobian contribution to the energy is quadratic in $v$, and thus does not affect the linear drift term $\nabla_T \log p_{\mathcal{M}_0}$ at leading order. The likelihood energy $E(v) = \frac{1}{2\sigma^2}\|x - y(v)\|^2$ expands as:
\begin{align}
    \|x - y(v)\|^2 &= \|(\bar{y} + n) - (\bar{y} + v + \frac{1}{2}\Pi(v,v))\|^2 + O(\|v\|^3) \\
    &= \|n - v - \frac{1}{2}\Pi(v,v)\|^2 \\
    &= \|n\|^2 + \|v\|^2 - 2\langle n, v \rangle - \langle n, \Pi(v,v) \rangle + \frac{1}{4}\|\Pi\|^2.
\end{align}
Using the orthogonality conditions $\langle n, v \rangle = 0$ and keeping terms up to second order, the relevant energy functional is:
\begin{equation*}
    E(v) = \text{const} + \frac{1}{2\sigma^2} \left( \|v\|^2 - \langle n, \Pi(v,v) \rangle \right) - \langle v, \nabla_T \log p_{\mathcal{M}_0} \rangle + O(\|v\|^3/\sigma^2).
\end{equation*}
This corresponds to a Gaussian distribution $\mathcal{N}(\mu_v, \Sigma_v)$ with precision matrix $\Lambda_v = \frac{1}{\sigma^2}(I - \mathcal{S}_n)$, where $\mathcal{S}_n$ is the shape operator defined by $\langle u, \mathcal{S}_n v \rangle = \langle n, \Pi(u, v) \rangle$.

\paragraph{Stability and Expectation.}
By Assumption \ref{ass:tubular_bound}, $\|n\| \|\Pi\|_{\text{op}} < 1$, ensuring $\|\mathcal{S}_n\|_{\text{op}} < 1$. Thus, $\Lambda_v$ is positive definite. We rigorously approximate the covariance $\Sigma_v = \Lambda_v^{-1}$ using the Neumann series expansion:
\begin{equation*}
    \Sigma_v = \sigma^2 (I - \mathcal{S}_n)^{-1} = \sigma^2 (I + \mathcal{S}_n + O(\|n\|^2)) = \sigma^2 I + O(\sigma^2 \|n\|).
\end{equation*}
Since $\|\mathcal{S}_n\|_{\text{op}} \le \kappa_{\max} \|n\|$, the anisotropic term $\sigma^2 \mathcal{S}_n$ is of order $O(\sigma^2 \|n\|)$. Thus, for the leading order trace calculation, we may approximate $\Sigma_v \approx \sigma^2 I$ with an error of $O(\sigma^2 \|n\|)$.
We now compute the expectation of the embedding $y(v)$:
\begin{equation*}
    \mathbb{E}[y(v) \mid x] = \bar{y} + \mathbb{E}[v \mid x] + \frac{1}{2}\mathbb{E}[\Pi(v, v) \mid x] + \mathbb{E}[O(\|v\|^3) \mid x].
\end{equation*}

\begin{enumerate}
    \item \textbf{Linear Drift:} The mean $\mu_v$ is given by $\Sigma_v \nabla_T \log p_{\mathcal{M}_0}(\bar{y})$. Substituting the expanded covariance:
    \begin{equation*}
        \mathbb{E}[v \mid x] = \mu_v = (\sigma^2 I + O(\sigma^2\|n\|)) \nabla_T \log p = \sigma^2 \nabla_T \log p_{\mathcal{M}_0} + O(\sigma^2\|n\|).
    \end{equation*}
    
    \item \textbf{Geometric Drift:} For the quadratic term, we use the identity $\mathbb{E}[\Pi(v, v)] = \Pi(\mu_v, \mu_v) + \text{Tr}(\Sigma_v \Pi)$. 
    \begin{itemize}
        \item The mean contribution $\Pi(\mu_v, \mu_v)$ scales as $\|\mu_v\|^2 \sim O(\sigma^4)$.
        \item The trace contribution uses the expanded $\Sigma_v$:
        \begin{align*}
            \text{Tr}(\Sigma_v \Pi) &= \text{Tr}((\sigma^2 I + O(\sigma^2\|n\|)) \Pi) \\
            &= \sigma^2 \text{Tr}(\Pi) + \sigma^2 O(\|n\| \cdot \|\Pi\|).
        \end{align*}
        Since $\text{Tr}(\Pi) = H(\bar{y})$, this simplifies to $\sigma^2 H + O(\sigma^2\|n\|)$.
    \end{itemize}
    Combining these, the quadratic expectation is:
    \begin{equation*}
        \mathbb{E}[\Pi(v, v) \mid x] = \sigma^2 H(\bar{y}) + O(\sigma^2\|n\|) + O(\sigma^4).
    \end{equation*}
    
    \item \textbf{Higher Order Mapping Error:} Since $v \sim O_p(\sigma)$, the expectation of the manifold mapping remainder $\mathbb{E}[O(\|v\|^3)]$ scales as $O(\sigma^3)$.
\end{enumerate}

Combining all terms:
\begin{equation*}
    \mathbb{E}[Y \mid x] = \bar{y} + \sigma^2 \nabla_T \log p_{\mathcal{M}_0} + \frac{1}{2}\sigma^2 H + O(\sigma^2 \|n\|) + O(\sigma^3).
\end{equation*}

Substituting back into Tweedie's formula (\ref{eq:tweedie}):
\begin{align*}
    \nabla \log p_{\sigma}(x) &= \frac{1}{\sigma^2} \left( (\bar{y} + \sigma^2 \nabla_T + \frac{1}{2}\sigma^2 H + O(\sigma^2 \|n\| + \sigma^3)) - (\bar{y} + n) \right) \\
    &= \frac{1}{\sigma^2} \left( \sigma^2 \nabla_T + \frac{1}{2}\sigma^2 H - n + O(\sigma^2 \|n\| + \sigma^3) \right) \\
    &= -\frac{n}{\sigma^2} + \nabla_T \log p_{\mathcal{M}_0} + \frac{1}{2}H + O(\|n\| + \sigma).
\end{align*}
\end{proof}

\begin{remark}[Connection to Recent Literature]
Recent work by \cite{liu2026improvingeuclideandiffusiongeneration} (Theorem 3.1) identifies a score discrepancy term $-\frac{1}{2}\sum \frac{\partial P}{\partial x} P$. Our derivation rigorously identifies this term as the Mean Curvature Vector $H$, providing a clear geometric interpretation: the score drift is an entropic force pulling the diffusion process toward the local center of curvature. Furthermore, our result explicitly separates the normal restoration force ($-n/\sigma^2$) from the geometric drift ($H/2$), verifying that both forces act simultaneously within the tubular neighborhood defined by Assumption \ref{ass:tubular_bound}.
\end{remark}

\section{Derivation of Rectified Flow Score from VE Model}

We derive the score decomposition for the Rectified Flow model by applying a diffeomorphic transformation to the static Variance Exploding (VE) model.



\begin{lemma}[RF-VE Equivalence]
    Define the time-dependent noise scale $\sigma(t) = \frac{t}{1-t}$. The random variable $X_t$ is related to the VE variable $X_{\sigma(t)}$ by the scaling $X_t = (1-t) X_{\sigma(t)}$.
    Consequently, the score functions satisfy:
    \begin{equation*}
        \nabla_{x_t} \log p_t^{RF}(x_t) = \frac{1}{1-t} \nabla_{x_\sigma} \log p_{\sigma(t)}^{VE}(x_\sigma) \Big|_{x_\sigma = \frac{x_t}{1-t}}.
    \end{equation*}
\end{lemma}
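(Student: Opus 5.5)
The plan is to write the change of variables explicitly and then push it through to the densities. Start from the Rectified Flow interpolation $X_t=(1-t)X_0+tZ$ with $t\in(0,1)$. Divide by $1-t>0$:
\begin{equation*}
\frac{X_t}{1-t}=X_0+\frac{t}{1-t}Z = X_0+\sigma(t)Z .
\end{equation*}
The right-hand side is exactly the Variance Exploding variable $X_{\sigma}=X_0+\sigma Z$ at noise level $\sigma=\sigma(t)$. Hence $X_{\sigma(t)}\overset{d}{=}X_t/(1-t)$; in fact the two are equal pathwise under the same coupling of $(X_0,Z)$. This gives $X_t=(1-t)X_{\sigma(t)}$.

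Next, relate the densities with the change-of-variables formula for the linear map $x_\sigma\mapsto x_t=(1-t)x_\sigma$. Its Jacobian determinant is $(1-t)^D$, a constant independent of $x$, so
\begin{equation*}
p_t^{RF}(x_t)=(1-t)^{-D}\,p^{VE}_{\sigma(t)}\!\left(\frac{x_t}{1-t}\right).
\end{equation*}
Taking logarithms, the constant $-D\log(1-t)$ disappears under $\nabla_{x_t}$. Differentiating the composition with the chain rule then yields the factor $\partial x_\sigma/\partial x_t=(1-t)^{-1}I$, which is the claimed identity.

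The only care needed is with regularity and the endpoint cases. Because $t<1$, the factor $1-t$ is nonzero. Because $t>0$, the Gaussian convolution makes $p^{VE}_\sigma$ smooth and strictly positive, so $\log$ and $\nabla$ are well defined even though $p_{data}$ is singular on $\mathcal{M}_0$. No step is a real obstacle. The one thing to state cleanly is that equality in law is all the score identity needs; pathwise equality is not required.

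This lemma is then meant to be combined with the VE decomposition proposition using $\sigma=t/(1-t)$. Under this map the normal vector scales as $n_t=(1-t)n$, and $\mathcal{M}_t=(1-t)\mathcal{M}_0$. Consequently $-n/\sigma^2$ becomes $-(1-t)n_t/t^2\cdot\frac{1}{1-t}\cdot\frac{1}{1-t}\cdot(1-t)$, which is $-n_t/t^2$ up to $O(1)$ corrections.
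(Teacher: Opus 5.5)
Your proof is correct and follows essentially the same route as the paper: apply the change of variables for the linear map $x_\sigma \mapsto (1-t)x_\sigma$ to get $p_t^{RF}(x_t)=(1-t)^{-D}p^{VE}_{\sigma(t)}(x_t/(1-t))$, then take logarithms and use the chain rule; you also verify the scaling $X_t/(1-t)=X_0+\sigma(t)Z$ and the regularity points, which the paper takes as given. In your closing remark the factor bookkeeping carries redundant terms, and the normal term in fact maps to exactly $-n_t/t^2$, with no $O(1)$ correction needed.
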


\begin{proof}
    Let $p_\sigma^{VE}$ be the marginal density of $X_\sigma$. The density of $X_t = (1-t)X_\sigma$ is given by the change of variables formula for probability densities:
    \begin{equation*}
        p_t^{RF}(x_t) = \frac{1}{(1-t)^D} p_{\sigma(t)}^{VE}\left( \frac{x_t}{1-t} \right).
    \end{equation*}
    Taking the logarithm:
    \begin{equation*}
        \log p_t^{RF}(x_t) = \log p_{\sigma(t)}^{VE}(x_\sigma) - D \log(1-t).
    \end{equation*}
    Applying the gradient operator $\nabla_{x_t}$. By the chain rule, $\nabla_{x_t} = (\frac{\partial x_\sigma}{\partial x_t})^\top \nabla_{x_\sigma} = \frac{1}{1-t} \nabla_{x_\sigma}$. Thus:
    \begin{equation*}
        \nabla_{x_t} \log p_t^{RF}(x_t) = \frac{1}{1-t} \nabla_{x_\sigma} \log p_{\sigma(t)}^{VE}(x_\sigma).
    \end{equation*}
\end{proof}

\subsection{Proof of Proposition \ref{prop:rf-score-decomp}}
\begin{proof}
    We start with the established VE decomposition (Proposition 1) for $x_\sigma$:
    \begin{equation*}
        \nabla_{x_\sigma} \log p_\sigma^{VE}(x_\sigma) = \underbrace{-\frac{n_\sigma}{\sigma^2}}_{\text{Normal}} + \underbrace{\nabla_T \log p_{\mathcal{M}_0}(\pi_0)}_{\text{Tangent}} + \underbrace{\frac{1}{2}H_0(\pi_0)}_{\text{Curvature}} + O(\sigma).
    \end{equation*}
    We apply the Link Lemma $\nabla_{x_t} = \frac{1}{1-t} \nabla_{x_\sigma}$ and transform each term using the geometric scaling laws.

    Since $\mathcal{M}_t$ is a homothety of $\mathcal{M}_0$ by factor $(1-t)$:
    \begin{itemize}
        \item The displacement scales linearly.
        \begin{equation*}
            n_t = (1-t) n_\sigma \implies n_\sigma = \frac{n_t}{1-t}.
        \end{equation*}
        \item Curvature scales inversely with length.
        \begin{equation*}
            H_t = \frac{1}{1-t} H_0 \implies H_0 = (1-t)H_t.
        \end{equation*}
        \item The probability mass is conserved, so $p_{\mathcal{M}_t}(y_t) \propto p_{\mathcal{M}_0}(\frac{y_t}{1-t})$. The gradient scales inversely with length.
        \begin{equation*}
            \nabla_T \log p_{\mathcal{M}_t} = \frac{1}{1-t} \nabla_T \log p_{\mathcal{M}_0} \implies \nabla_T \log p_{\mathcal{M}_0} = (1-t)\nabla_T \log p_{\mathcal{M}_t}.
        \end{equation*}
    \end{itemize}

    Substitute these relations into the VE decomposition equation multiplied by $\frac{1}{1-t}$:

    \begin{itemize}
        \item {Normal Term:}
        \begin{align*}
            \frac{1}{1-t} \left( -\frac{n_\sigma}{\sigma^2} \right) &= \frac{1}{1-t} \left( - \frac{n_t / (1-t)}{(t / (1-t))^2} \right) \\
            &= \frac{1}{1-t} \left( - \frac{n_t (1-t)^2}{(1-t) t^2} \right) \\
            &= -\frac{n_t}{t^2}.
        \end{align*}
        
        \item {Tangent Term:}
        \begin{equation*}
            \frac{1}{1-t} \left( \nabla_T \log p_{\mathcal{M}_0} \right) = \frac{1}{1-t} \left( (1-t) \nabla_T \log p_{\mathcal{M}_t} \right) = \nabla_T \log p_{\mathcal{M}_t}.
        \end{equation*}
        
        \item {Curvature Term:}
        \begin{equation*}
            \frac{1}{1-t} \left( H_0 \right) = \frac{1}{1-t} \left( (1-t) H_t \right) = H_t.
        \end{equation*}
    \end{itemize}

    \paragraph{3. Conclusion.}
    Summing the transformed terms yields the result:
    \begin{equation*}
        \nabla_{x_t} \log p_t^{RF}(x_t) = -\frac{n_t}{t^2} + \nabla_T \log p_{\mathcal{M}_t}(\pi_t) + \frac{1}{2}H_t(\pi_t) + O(1).
    \end{equation*}
\end{proof}
\section{Proof of Prop 3.3}
\begin{proof}
We analyze the projection of the update step onto the normal space $N_k \equiv N_{\pi_k}\M_{t_k}$. To first order, $n_{k+1} \approx n_k + P_{N_k}(x_{k+1} - x_k)$. Substituting the update rule and score decomposition:
\begin{align*}
P_{N_k}(x_{k+1} - x_k) &= \eta_k P_{N_k}\left( -\frac{n_k}{t_k^2} + \frac{1}{2} H_{t_k} - \frac{n_k + \pi_k - (1-t_k)x_{\text{ref}}}{\sigma_p^2(t_k)} \right) \\
&= \eta_k \left( -\frac{n_k}{t_k^2} + \frac{1}{2} H_{t_k} - \frac{n_k}{\sigma_p^2(t_k)} - \frac{P_{N_k}(\pi_k - (1-t_k)x_{\text{ref}})}{\sigma_p^2(t_k)} \right).
\end{align*}
Collecting the $n_k$ terms yields the contraction coefficient $(1 - \eta_k(t_k^{-2} + \sigma_p^{-2}(t_k)))$. The remaining terms constitute the forcing constant $C_N$.
\end{proof}

\section{Proof of Prop 3.4}
\begin{proof}
We derive the evolution of the projected point $\pi_k$ by analyzing the push-forward of the ambient update rule through the metric projection map $\Pi$.

The update rule $x_{k+1} = x_k + \eta_k F(x_k)$ is driven by the total force $F(x_k)$, which combines the score and the proximal fidelity term:
\begin{equation*}
    F(x_k) = s_\psi(x_k, t_k) - \frac{x_k - (1-t_k)\xref}{\sigma_p^2(t_k)}.
\end{equation*}
Using the unique orthogonal decomposition $x_k = \pi_k + n_k$ (where $n_k \perp T_{\pi_k}\mathcal{M}_{t_k}$), we expand the fidelity term:
\begin{equation*}
    F(x_k) = s_\psi(x_k, t_k) - \frac{\pi_k - (1-t_k)\xref}{\sigma_p^2(t_k)} - \frac{n_k}{\sigma_p^2(t_k)}.
\end{equation*}

The change in the projected variable is given by the differential $d\Pi_{x_k}$ acting on the update step. By Taylor expansion:
\begin{equation*}
    \pi_{k+1} = \Pi(x_k + \eta_k F(x_k)) = \pi_k + \eta_k d\Pi_{x_k}(F(x_k)) + O(\eta_k^2).
\end{equation*}
For a point $x_k$ within the tubular neighborhood, the differential is given by $d\Pi_{x_k} = (I - \mathcal{S}_{n_k})^{-1} P_{T_k}$, where $\mathcal{S}_{n_k}$ is the Shape Operator in the normal direction $n_k$. Thus:
\begin{equation*}
    \pi_{k+1} - \pi_k \approx \eta_k (I - \mathcal{S}_{n_k})^{-1} P_{T_k}(F(x_k)).
\end{equation*}

We now compute the orthogonal tangential component $v_{tan} = P_{T_k}(F(x_k))$.
\begin{itemize}
    \item \textit{Score Component:} Using the decomposition from Proposition 3.2, $s_\psi \approx -\frac{n_k}{t_k^2} + \nabla_T \log p + H_{t_k}$. Since $n_k$ and the mean curvature $H_{t_k}$ are normal vectors, they vanish under projection:
    \begin{equation*}
        P_{T_k}(s_\psi) = \nabla_T \log p_{\mathcal{M}_{t_k}}(\pi_k).
    \end{equation*}
    \item \textit{Fidelity Component:} The normal error component $n_k/\sigma_p^2$ vanishes, leaving only the projection of the manifold displacement:
    \begin{equation*}
        P_{T_k}\left( - \frac{\pi_k + n_k - (1-t_k)\xref}{\sigma_p^2(t_k)} \right) = - \frac{1}{\sigma_p^2(t_k)} P_{T_k}(\pi_k - (1-t_k)\xref).
    \end{equation*}
\end{itemize}
Summing these gives the ideal semantic velocity $v_{tan}$.

The total tangential update is given by applying the inverse operator to the ideal velocity:
\begin{equation*}
    \frac{\pi_{k+1} - \pi_k}{\eta_k} = (I - \mathcal{S}_{n_k})^{-1} v_{tan}.
\end{equation*}
We define the drift error $\mathcal{E}_{drift}$ as the deviation from the ideal velocity $v_{tan}$:
\begin{equation*}
    \mathcal{E}_{drift} = \left( (I - \mathcal{S}_{n_k})^{-1} - I \right) v_{tan}.
\end{equation*}
Using the operator identity $(I - A)^{-1} - I = A(I - A)^{-1}$, we can rewrite the drift explicitly in terms of the shape operator:
\begin{equation*}
    \mathcal{E}_{drift} = \mathcal{S}_{n_k} (I - \mathcal{S}_{n_k})^{-1} v_{tan}.
\end{equation*}
We now apply the operator norm inequality $\|AB\| \le \|A\|\|B\|$. Recall that for the inverse operator, $\|(I - A)^{-1}\| \le \frac{1}{1 - \|A\|}$ provided $\|A\| < 1$.
\begin{equation*}
    \|\mathcal{E}_{drift}\| \le \|\mathcal{S}_{n_k}\|_{op} \frac{1}{1 - \|\mathcal{S}_{n_k}\|_{op}} \|v_{tan}\|.
\end{equation*}
Finally, substituting the curvature bound $\|\mathcal{S}_{n_k}\|_{op} \le \kappa_{max}\|n_k\|$ (valid within the tubular neighborhood where $\kappa_{max}\|n_k\| < 1$), we obtain the strict bound:
\begin{equation*}
    \|\mathcal{E}_{drift}\| \le \frac{\kappa_{max} \|n_k\|}{1 - \kappa_{max}\|n_k\|} \|v_{tan}\|.
\end{equation*}
This confirms that the drift vanishes linearly with $\|n_k\|$ (asymptotically $\approx \kappa_{max}\|n_k\|$).
\end{proof}

\section{Proof of Theorem \ref{thm:fixed_point}}

\begin{proof}
We analyze the fixed-point condition where the update step vanishes, $x_{k+1} = x_k$, which implies the total force is zero: $F(x^*) = 0$.

Recall the force decomposition from the proof of Proposition 3.4. At the fixed point $x^* = \pi^* + n^*$, the total force is:
\begin{equation*}
    F(x^*) = \underbrace{-\frac{n^*}{t^2} + \nabla_T \log p(\pi^*) + H_t(\pi^*)}_{\text{Score } s_\psi} - \underbrace{\frac{\pi^* + n^* - (1-t)\xref}{\sigma_p^2(t)}}_{\text{Fidelity Force}} = 0.
\end{equation*}

We project this equation onto the normal space $N_{\pi^*}\mathcal{M}_t$ using the projector $P_N$. Recall that $P_N(n^*) = n^*$, $P_N(\nabla_T \log p) = 0$, and $P_N(H_t) = H_t$.
The projection yields:
\begin{equation*}
    -\frac{n^*}{t^2} + H_t - \frac{n^*}{\sigma_p^2(t)} - \frac{P_N(\pi^* - (1-t)\xref)}{\sigma_p^2(t)} = 0.
\end{equation*}
Grouping the $n^*$ terms:
\begin{equation*}
    -n^* \left( \frac{1}{t^2} + \frac{1}{\sigma_p^2(t)} \right) + \underbrace{H_t - \frac{P_N(\pi^* - (1-t)\xref)}{\sigma_p^2(t)}}_{\text{Forcing } C_{net}} = 0.
\end{equation*}
Solving for the magnitude $\|n^*\|$:
\begin{equation*}
    \|n^*\| = \frac{\|C_{net}\|}{\frac{1}{t^2} + \frac{1}{\sigma_p^2(t)}} = \frac{t^2 \sigma_p^2(t)}{t^2 + \sigma_p^2(t)} \|C_{net}\|.
\end{equation*}
Since $\|C_{net}\| \le C_N$ (which is bounded due to the compactness of the manifold and smoothness of the reference projection), and $\lim_{t \to 0} \frac{t^2 \sigma_p^2}{t^2 + \sigma_p^2} = t^2$, we have $\|n^*\| = O(t^2)$. This guarantees that the final sample lies on the manifold $\mathcal{M}_0$ with vanishing error.

We project the stationarity condition onto the tangent space $T_{\pi^*}\mathcal{M}_t$ using $P_T$. Recall that $P_T(n^*) = 0$ and $P_T(H_t) = 0$.
The projection yields:
\begin{equation*}
    \nabla_T \log p_{\mathcal{M}_t}(\pi^*) - \frac{P_T(\pi^* - (1-t)\xref)}{\sigma_p^2(t)} = 0.
\end{equation*}
Rearranging terms gives the stated balance equation:
\begin{equation*}
    \nabla_T \log p_{\mathcal{M}_t}(\pi^*) = \frac{1}{\sigma_p^2(t)} P_{T_{\pi^*}}(\pi^* - (1-t)\xref).
\end{equation*}

As $t \to 0$, the manifold $\mathcal{M}_t$ converges to the clean data manifold $\mathcal{M}_0$. The term $(1-t)\xref \to \xref$, and $\sigma_p^2(t) \to \sigma_p^2$. The equilibrium condition becomes:
\begin{equation*}
    \nabla_T \log p_{\mathcal{M}_0}(\pi^*) = \frac{1}{\sigma_p^2} P_{T_{\pi^*}}(\pi^* - \xref).
\end{equation*}
We recognize the RHS as the Riemannian gradient of the Euclidean likelihood potential $U(\pi) = -\frac{1}{2\sigma_p^2}\|\pi - \xref\|^2$ restricted to the manifold. Thus, the equation is:
\begin{equation*}
    \nabla_T \log p_{\mathcal{M}_0}(\pi^*) + \nabla_T \left( -\frac{1}{2\sigma_p^2} \|\pi^* - \xref\|^2 \right) = 0.
\end{equation*}
This is precisely the first-order necessary condition for the constrained optimization problem:
\begin{equation*}
    \pi^* = \arg\max_{y \in \mathcal{M}_0} \left[ \log p(y) - \frac{1}{2\sigma_p^2} \|y - \xref\|^2 \right].
\end{equation*}
Thus, the algorithm recovers the Maximum A Posteriori (MAP) estimate of the clean image given the reference $\xref$, strictly confined to the data manifold $\mathcal{M}_0$.
\end{proof}

\section{Derivation: Score of the Conditional Flow} \label{app:rf-ode-sampler}

We aim to find the score of the marginal distribution $q_t(x_t \mid x_{\text{ref}})$ defined by the conditional generative process $X_t = (1-t)X_0 + tZ$, where $X_0 \sim p_0(\cdot \mid \xref)$:
\begin{equation*}
    q_t(x_t \mid x_{\text{ref}}) = \int p(x_0 \mid x_{\text{ref}}) \, p_t(x_t \mid x_0) \, dx_0
\end{equation*}
By Bayes' rule, substitute $p(x_0 \mid x_{\text{ref}}) = \frac{p(x_{\text{ref}} \mid x_0) p(x_0)}{p(x_{\text{ref}})}$:
\begin{equation*}
    q_t(x_t \mid x_{\text{ref}}) = \frac{1}{p(x_{\text{ref}})} \int p(x_{\text{ref}} \mid x_0) \, p(x_0) \, p_t(x_t \mid x_0) \, dx_0
\end{equation*}
Using the property of the forward diffusion process, we know that $p(x_0) p_t(x_t \mid x_0) = p_t(x_t) p(x_0 \mid x_t)$. Substituting this into the integral:
\begin{equation*}
    q_t(x_t \mid x_{\text{ref}}) = \frac{p_t(x_t)}{p(x_{\text{ref}})} \int p(x_{\text{ref}} \mid x_0) \, p(x_0 \mid x_t) \, dx_0
\end{equation*}
We assume the structural property of \textit{Conditional Independence}: $x_{\text{ref}} \perp x_t \mid x_0$. This implies $p(x_{\text{ref}} \mid x_0) = p(x_{\text{ref}} \mid x_0, x_t)$. Thus, the integral becomes the definition of the conditional likelihood $p(x_{\text{ref}} \mid x_t)$:
\begin{align*}
    \int p(x_{\text{ref}} \mid x_0, x_t) \, p(x_0 \mid x_t) \, dx_0 &= p(x_{\text{ref}} \mid x_t)
\end{align*}
Substituting back:
\begin{equation*}
    q_t(x_t \mid x_{\text{ref}}) = \frac{p_t(x_t) \cdot p(x_{\text{ref}} \mid x_t)}{p(x_{\text{ref}})}
\end{equation*}
Taking the logarithm and gradient with respect to $x_t$:
\begin{align*}
    \nabla_{x_t} \log q_t(x_t \mid x_{\text{ref}}) &= \nabla_{x_t} \log p_t(x_t) + \nabla_{x_t} \log p(x_{\text{ref}} \mid x_t) - \underbrace{\nabla_{x_t} \log p(x_{\text{ref}})}_{0} \\
    &= \nabla \log p_t(x_t) + \nabla \log p(x_{\text{ref}} \mid x_t)
\end{align*}
\textbf{Conclusion:} The sum of the unconditional score and the likelihood score exactly recovers the score of the true posterior path.

\section{RF-Inversion Analysis} \label{app:rf-inversion}

We analyze the flow driven by the mixed score field $s_{\text{mix}}(x, t)$ with weight $\eta$:
\begin{equation*}
    s_{\text{mix}}(x, t) = (\eta) \nabla \log p(x_{\text{ref}} \mid x_t) + (1-\eta) \nabla \log p_t(x_t)
\end{equation*}


\begin{theorem}[Equivalence of Hard-Guidance SGPP and RF-Inversion]
Let $\mathcal{V}_{SGPP}(x, t; \eta)$ denote the velocity field of Score-Guided Proximal Projection at time $t \in [1, 0]$ using a geometric score mixture with weight $\eta$ and proximal variance schedule $\sigma_p(t)$. Let $\mathcal{V}_{RF}(x, \tau; \eta)$ denote the control-guided velocity field of RF-Inversion at time $\tau = 1-t \in [0, 1]$ with reference $y_0$.

The fields are defined as:
\begin{align}
    \mathcal{V}_{SGPP}(x, t) &= -\frac{x}{1-t} - \frac{t}{1-t}\left[ (1-\eta)s_\theta(x,t) + \eta \nabla \log p(y_0|x) \right] \label{eq:sgpp_def} \\
    \mathcal{V}_{RF}(x, \tau) &= (1-\eta)v_\theta(x, \tau) + \eta \frac{y_0 - x}{1-\tau} \label{eq:rf_def}
\end{align}
In the hard-guidance limit where the proximal width $\sigma_{p} \to 0$, the vector fields are equivalent under time-reversal:
\begin{equation*}
    \lim_{\sigma_p \to 0} \mathcal{V}_{SGPP}(x, t; \eta) = - \mathcal{V}_{RF}(x, 1-t; \eta)
\end{equation*}
\end{theorem}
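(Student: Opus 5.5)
The plan is a direct computation. I would substitute the Gaussian likelihood into \eqref{eq:sgpp_def}, take the limit $\sigma_p \to 0$ in the proximal variance schedule, and then match the result term by term with \eqref{eq:rf_def}. I would split the SGPP field along the convex weights $(1-\eta)$ and $\eta$, writing the drift $-\frac{x}{1-t}$ as $-(1-\eta)\frac{x}{1-t} - \eta\frac{x}{1-t}$. This gives two pieces, a ``prior'' piece weighted by $(1-\eta)$ and a ``guidance'' piece weighted by $\eta$. I would show that each piece separately equals minus the corresponding piece of $\mathcal{V}_{RF}$ at $\tau = 1-t$. I would fix $t \in (0,1)$ throughout and treat the limit as pointwise in $(x,t)$; it is locally uniform on compacts of $t \in (0,1)$.

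\textbf{Guidance piece.} By the Gaussian likelihood of Section 4,
\[
\nabla \log p(y_0 \mid x) = -\frac{x-(1-t)y_0}{\sigma_p^2(t)}, \qquad \sigma_p^2(t) = (1-t)^2\sigma_p^2 + t^2 \xrightarrow[\sigma_p\to 0]{} t^2 .
\]
Since $t>0$, the limit passes through the quotient, and the guidance piece becomes
\[
-\eta\frac{x}{1-t} + \eta\frac{t}{1-t}\cdot\frac{x-(1-t)y_0}{t^2}.
\]
The $x$-coefficient is $\eta\left(\frac{1}{t(1-t)} - \frac{1}{1-t}\right) = \frac{\eta}{t}$, and the $y_0$-term is $-\frac{\eta y_0}{t}$. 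The guidance piece is therefore $\eta\frac{x-y_0}{t}$. This is exactly $-\eta\frac{y_0-x}{1-\tau}$ at $\tau = 1-t$.

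\textbf{Prior piece.} What remains is $(1-\eta)\left[-\frac{x}{1-t} - \frac{t}{1-t}s_\theta(x,t)\right]$. By the Tweedie identity in the Preliminaries, this is $(1-\eta)\,v(x,t)$, the pretrained RF velocity in the paper's $t$-convention (noise at $t=1$). RF-Inversion parametrizes the same flow with $\tau = 1-t$ running the opposite way. Its field therefore satisfies $v_\theta(x,\tau) = \frac{d x}{d\tau} = -\frac{dx}{dt} = -v(x,1-\tau)$. Substituting gives $(1-\eta)v(x,t) = -(1-\eta)v_\theta(x,1-t)$. Adding the two pieces yields the claimed identity $\lim_{\sigma_p\to0}\mathcal{V}_{SGPP}(x,t;\eta) = -\mathcal{V}_{RF}(x,1-t;\eta)$.

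\textbf{Main obstacle.} The algebra is routine; the delicate point is this bookkeeping of conventions. I must state explicitly that $v_\theta$ in \eqref{eq:rf_def} is the time-reversed parametrization of the same learned field, so that $v_\theta(x,\tau) = -v(x,1-\tau)$ with $s_\theta \approx \nabla\log p_t$. I must also check that the reference $y_0$ plays the role of $\xref$ at the data end. If RF-Inversion's controller is written instead with the target at $\tau=1$ (the data end), the sign of the $\frac{1}{1-\tau}$ factor must be rechecked against \eqref{eq:rf_def} before matching. I would also remark that the limit is singular at $t=0$ (the $1/t$ factor), so the equivalence is asserted for $t \in (0,1)$ only.
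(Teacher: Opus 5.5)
Your proposal is correct and follows essentially the same route as the paper: split $\mathcal{V}_{SGPP}$ along the weights $(1-\eta)$ and $\eta$, use $\sigma_p^2(t)\to t^2$ in the Gaussian likelihood score to reduce the guidance piece to $\eta\frac{x-y_0}{t}$, and match it with $-\eta\frac{y_0-x}{1-\tau}$ at $\tau=1-t$. Your explicit identification $v_\theta(x,\tau)=-v(x,1-\tau)$ for the prior piece is more careful than the paper, which matches only the conditional terms and justifies the unconditional sign only by noting that the two flows are integrated in opposite time directions.
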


\begin{proof}
 We decompose (\ref{eq:sgpp_def}) into unconditional and conditional components:
\begin{equation*}
    \mathcal{V}_{SGPP} = (1-\eta)\underbrace{\left[ -\frac{x}{1-t} - \frac{t}{1-t}s_\theta(x,t) \right]}_{v_{unc}(x,t)} + \eta \underbrace{\left[ -\frac{x}{1-t} - \frac{t}{1-t}\nabla \log p(y_0|x) \right]}_{v_{cond}(x,t)}
\end{equation*}
The likelihood score $\nabla \log p(y_0|x)$ assumes a Gaussian forward process centered at $(1-t)y_0$. In the limit $\sigma_p \to 0$, the variance schedule $\sigma_p(t) \approx t$, yielding:
\begin{equation*}
    \lim_{\sigma_p \to 0} \nabla \log p(y_0|x) = \lim_{\sigma_p \to 0} -\frac{x - (1-t)y_0}{\sigma_p^2(t)} = -\frac{x - (1-t)y_0}{t^2}
\end{equation*}
 Substituting the limiting score into $v_{cond}$:
\begin{align*}
    v_{cond}(x, t) &= -\frac{x}{1-t} - \frac{t}{1-t} \left( -\frac{x - (1-t)y_0}{t^2} \right) \\
    &= \frac{-tx + (x - y_0 + ty_0)}{t(1-t)} = \frac{(1-t)(x - y_0)}{t(1-t)} = \frac{x - y_0}{t}
\end{align*}
Thus, the total SGPP field becomes $\mathcal{V}_{SGPP} = (1-\eta)v_{unc} + \eta (\frac{x - y_0}{t})$.

(\ref{eq:rf_def}) defines the RF conditional term as $\frac{y_0 - x}{1-\tau}$. With $\tau = 1-t$, this becomes $\frac{y_0 - x}{t}$. Since SGPP integrates backward ($dt$) and RF-Inversion forward ($d\tau$), physical equivalence requires opposite signs. Comparing the conditional forces:
\begin{equation*}
    \frac{x - y_0}{t} = - \left( \frac{y_0 - x}{t} \right)
\end{equation*}
The forces are identical in magnitude and opposite in direction, confirming the fields are equivalent.
\end{proof}
\end{document}